\newcommand{\opt}{\textsc{Opt}}
\newcommand{\cost}{\textrm{cost}}
\newcommand{\error}{\textrm{AbsError}\xspace}
\newcommand{\Ratio}{\textrm{RelError}\xspace}
\newcommand{\Loss}{\textrm{RelErrorDiff}\xspace}
\DeclareMathOperator*{\argmin}{arg\,min}
\newcommand{\calL}{\mathcal{L}}
\newcommand{\zbar}{\overline{z}}
\newcommand{\ybar}{\overline{y}}
\newtheorem{thm}{Theorem}
\theoremstyle{definition}
\newtheorem{defn}[thm]{Definition}
\newtheorem{remark}[thm]{Remark}
\newtheorem{corr}[thm]{Corollary}
\newtheorem{obs}[thm]{Observation}
\newtheorem{lemma}[thm]{Lemma}
  \providecommand\BibTeX{{%
    \normalfont B\kern-0.5em{\scshape i\kern-0.25em b}\kern-0.8em\TeX}}}
\renewcommand\footnotetextcopyrightpermission[1]{} 
\begin{document}

\title[Fair Clustering]{Fair Clustering via Equitable Group Representations}
\titlenote{This research was funded in part by the NSF under grants IIS-1633724 and CCF-2008688.}

\author{Mohsen Abbasi}
\email{mohsen@cs.utah.edu}
\affiliation{
  \institution{University of Utah}
}

\author{Aditya Bhaskara}
\email{bhaskara@cs.utah.edu}
\affiliation{
  \institution{University of Utah}
}

\author{Suresh Venkatasubramanian}
\email{suresh@cs.utah.edu}
\affiliation{%
  \institution{University of Utah}
}

\renewcommand{\shortauthors}{Abbasi, et al.}

\begin{abstract}
What does it mean for a clustering to be fair? One popular approach seeks to ensure that each cluster contains groups in (roughly) the same proportion in which they exist in the population. The normative principle at play is balance: any cluster might act as a representative of the data, and thus should reflect its diversity.  

But clustering also captures a different form of representativeness. A core principle in most clustering problems is that a cluster center should be representative of the cluster it represents, by being ``close" to the points associated with it. This is so that we can effectively replace the points by their cluster centers without significant loss in fidelity, and indeed is a common ``use case'' for clustering.  For such a clustering to be fair, the centers should ``represent'' different groups equally well. We call such a clustering a group-representative clustering.  

In this paper, we study the structure and computation of group-representative clusterings. We show that this notion naturally parallels the development of fairness notions in classification, with direct analogs of ideas like demographic parity and equal opportunity. We demonstrate how these notions are distinct from and cannot be captured by balance-based notions of fairness.  We present approximation algorithms for group representative $k$-median clustering and couple this with an empirical evaluation on various real-world data sets. We also extend this idea to facility location, motivated by the current problem of assigning polling locations for voting. 
\end{abstract}

\keywords{algorithmic fairness, clustering, representation, facility location}
	
\maketitle

\section{Introduction}
\label{sec:intro}
Growing use of automated decision making has sparked a debate concerning bias, and what it means to be fair in this setting. As a result, an extensive literature exists on algorithmic fairness, and in particular on how to define fairness for problems in supervised learning \cite{Dwork12Fairness,Romei13Multidisciplinary,Feldman2015DisparateImpact,hardt2016equality,arvindtutorial,mitchell2018predictionbased}. 
However, these notions are not readily applicable to \emph{unsupervised} learning problems such as clustering. One reason is that unlike in the supervised setting, a well-defined notion of ground truth does not exist in such problems. In 2017, \cite{chierichetti2017fair} proposed the idea of \emph{balance} as a notion of fairness in clustering. Given a set of data points with a type assigned to each one, balance asks for a clustering where each cluster has roughly the same proportion of types as the overall population. This definition spawned a flurry of research on efficient algorithms for fair clustering \cite{chierichetti2017fair, kleindessner2019fair, kleindessner2019guarantees, chen2019proportionally, schmidt2018fair, ahmadian2019clustering, rosner2018privacy}. Further work by other researchers has extended this definition, but with the same basic principle of proportionality \citep{abraham2019fairness, backurs2019scalable, bercea2018cost, huang2019coresets, bera2019fair, wang2019towards, jing2019clustering}.

Balance draws its meaning from the perspective of clustering as a generalization of classification from two to many categories. If we select individuals into multiple categories where each category has some associated benefits (or harms), balance asks that different groups receive these benefits or harms in similar proportions. For example, a tool that assigns individuals to different categories based on the loan packages being offered might attempt to ensure demographic balance across all categories. 

But an important purpose of clustering is to find \emph{representatives} for points by grouping them and choosing a representative (like a cluster center). Consider for example the problem of redistricting -- where the goal is to partition a region into districts, each served by one representative who can speak to the concern of the district. The criterion of balance, applied (say) to partisan affiliation, will result in districts that have a proportional number of residents associated with each party. This is unfortunately the worst kind of redistricting! It is a form of gerrymandering known as cracking, and results in the majority party being able to control representation in \emph{all} the districts. 

In fact the vast majority of clustering formulations focus on the problem of quality representation. And the quality of representation is usually measured by some form of distance to the chosen representative -- the greater the distance, the poorer the representation. There are serious implications for fairness as measured in terms of "access", and we illustrate this with an example of current concern. 

Consider the placement of polling locations. A study showed that in the 2016 US presidential election, voters in predominantly black neighborhoods waited 29 percent longer at polling locations, than those in white neighborhoods \citep{chen2019racial}, and in 2020 we are currently seeing polling locations removed or merged because of the pandemic. The clustering here is the induced clustering where each resident is associated with a specific polling location (the representative) and the quality of representation can be measured as a function both of the distance to the polling location and the waiting time at the location itself. 

It is easy to see that balance criteria cannot accurately capture the goal of representation equity. 
To illustrate why, see the example presented in Figure \ref{fig:balance}, which shows a balance-preserving $k$-means clustering on the left for two groups denoted by the colors red and blue, and regular $k$-means clustering on the right. Here, the number of red points is larger than the other group. Therefore, each cluster center is chosen close to its respective red group's centroid. As a result, red points are better represented by chosen centers compared to blue points. 
\begin{figure}
\centering
\includegraphics[width=\columnwidth]{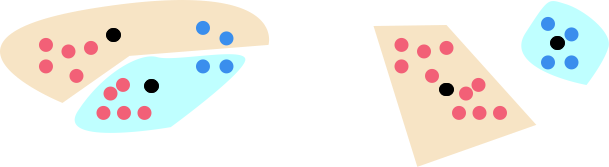}
\caption
{Balance is preserved in the left hand figure but the centers are much more representative of the red points. In contrast the clustering on the right represents both groups comparably}    
\label{fig:balance}
\end{figure}

\subsection{Our contributions}

In this paper we introduce a new way to think about fairness in clustering based on the idea of equity in representation for groups. We present a number of different ways of measuring representativeness and interestingly, show that they 
\begin{inparaenum}[a)]
\item naturally parallel standard notions of fairness in the supervised learning literature and 
\item are incompatible in the same way that different notions of fairness in supervised learning are. 
\end{inparaenum}
We present algorithms for computing fair clusterings under these notions of fairness. These algorithms come with formal approximation guarantees -- we also present an empirical study of their behavior in the context of the problem of polling location placement. 

A second key contribution of this work takes advantage of the relation between clustering and the associated problem of \emph{facility location}. In most clustering problems, the number of clusters that the algorithm must produce is fixed in advance. In the context of polling, this is equivalent to specifying the number of polling locations to be established ahead of time. But we can associate a cost with \emph{opening} a new facility (or cluster center), giving us a choice of either assigning a point to a center that might be far away or opening a new center that might be closer. This is the well known problem of facility location\cite{shmoys1997approximation} -- it is closely related to clustering\footnote{We can think of facility location as the Lagrangian relaxation of clustering where the hard constraint on the number of clusters is replaced by a term in objective function.}. Again returning to the context of polling, the framing in terms of facility location allows for the possibility of creating new polling locations if the cost of opening can be ``paid for'' in terms of improved access. We define what representation equity means in the context of facility location and present algorithms (and experimental support) as well. Our algorithms also allow us to incorporate {\em load balancing} between different facilities. In the context of polling, we can ensure that no more than $U$ voters are assinged to a polling location, for a user-specified threshold $U$. 


\subsection{Related Work}
\label{sec:related-work}
Chierichetti et al. \cite{chierichetti2017fair} introduced balance as a fairness constraint in clustering for two groups. Considering the same setting with binary attribute for groups, Backurs et al. improved the running time of their algorithm for fair $k$-median \citep{backurs2019scalable}. Rösner and Schmidt \citep{rosner2018privacy} proposed a constant-factor approximation algorithm for fair $k$-center problem with multiple protected classes. Bercea et al.  \citep{bercea2018cost} proposed bicriteria constant-factor approximations for several classical clustering objectives and improved the results of R\"osner and Schmidt. Bera et al. \citep{Bera19} generalized previous works by allowing maximum over- and minimum under-representation of groups in clusters, and multiple, non-disjoint sensitive types in their framework. Other works have studied multiple types setting \citep{wang2019towards}, multiple, non-disjoint types \citep{huang2019coresets} and cluster dependent proportions \citep{jing2019clustering}.

In a different line of work, Ahmadian et al. \citep{ahmadian2019clustering} studied fair $k$-center problem where there is an upper bound on maximum fraction of a single type within each cluster. Chen et al. \citep{chen2019proportionally} studied a variant of fair clustering problem where any large enough group of points with respect to the number of clusters are entitled to their own cluster center, if it is closer in distance to all of them.

A large body of works in the area of algorithmic fairness have focused on ensuring fair representation of all social groups in the machine learning pipeline \citep{bolukbasi2016man, samadi_price_2018, abbasi2019fairness}. Recent work by Mahabadi et al. \citep{mahabadi2020individual}, studies the problem of individually fair clustering, under the fairness constraint proposed by Jung et al. \citep{jung2019center}. In their framework, if $r(x)$ denotes the minimum radius such that the ball of radius $r(x)$ centered at $x$ has at least $n/k$ points, then at least one center should be opened within $r(x)$ distance from $x$. In recent independent work, Ghadiri et al.~\citep{ghadiri2020fair} propose a fair $k$-means objective similar to one of our objectives, and study a variant of Lloyd’s algorithm for determining cluster centers.
\section{Fair Clustering}

In this section we introduce notions of fair clustering that are rooted in the idea of equitable representation. To that end, we introduce different ways to measure the cost of group representation. We can then define a \emph{fair} clustering. 

We start with some basic definitions. Given a set of points $X$, a clustering is a partitioning\footnote{It is possible to define so-called \emph{soft} clusterings in which a point might be assigned fractionally to multiple clusters. We do not consider such clusterings in this paper.} of $X$ into clusters $C_1, C_2, \dots, C_k$. For most of the paper we will consider clustering objectives that satisfy the \emph{Voronoi property}: the optimal assignment for a point is the cluster center nearest to it. This includes the usual clustering formulations like $k$-center, $k$-means and $k$-median. For such clusterings, the cluster center defines the cluster and thus we can represent a clustering more compactly as the set of cluster \emph{centers} $C = \{c_1, c_2, \dots, c_k\}$. The cost of a clustering $C$ of a set of points $P$ is given by the function $\cost_C(P)$. For any subset of points $S$, we denote the cost of assigning $S$ to cluster centers in a given clustering $C$ as $\cost_C(S)$. Finally, given a cost function $\cost$ and a set of points $X$ we denote the set of centers in an optimal clustering of $X$ by $\opt_\cost(X)$. When the context is clear, we will drop the subscript and merely write this as $\opt(X)$. As usual, an $\alpha$-approximation algorithm is that one that returns a solution that is within an $\alpha$-multiplicative factor of $\opt(X)$.

Each point in $X$ is associated with one of $m$ \emph{groups} (e.g., demographic) that we wish to ensure fairness with respect to. We define the subset of points in group $i$ as $X_i$. 
\begin{defn}(Fair Clustering). 
\label{def:fair}
Given $m$ groups $X_1, X_2, \dots, X_m$,  fair clustering minimizes the maximum average (representation) cost across all groups:
\[\argmin_{C \in \mathcal{C}}~ \text{max} \left( \frac{1}{|X_1|}\cost_C(X_1), \dots, \frac{1}{|X_m|}\cost_C(X_m) \right) \]
where $\mathcal{C}$ is the set of all possible choices of cluster centers.
\end{defn}

Note that we are not trying to force all groups to have the same representation cost; that constraint can be trivially satisfied by ensuring all groups have a large cost (i.e via a poor representation). Rather, we want to ensure all groups have good representation while still ensuring that the gap between groups (measured by the group with the maximum cost) is small. This also distinguishes our definition from proposals for clustering cost that try to minimize the maximum cost across \emph{clusters} \citep{tzortzis2014minmax} instead of across \emph{groups.}

\subsection{Quality of group representation}
\label{sec:notions}

We now introduce different ways to measure the group representation cost $\cost_C(X_i)$.
\paragraph{Absolute Representation Error}
In supervised learning, \emph{statistical parity} captures the idea that groups should have similar outcomes. Rephrasing, it says that groups should be \emph{represented equally well in the output}. 
In the case of binary classification, statistical parity requires 
\[\Pr(h(x) = 1|S = a) = \Pr(h(x) = 1| S = b)\]
for two groups $a$ and $b$, where $S$ denotes the sensitive attribute. When clustering, the equivalent notion of statistical parity asks that cluster centers represent all groups equally well, regardless of their potentially different distributions. More specifically, the average distance between members of a group and their respective cluster centers should look the same across groups. Motivated by this, we introduce the following definition of representation cost. 

\begin{defn}[\error]
\label{def:abs-error}
  The absolute (representation) error of a clustering is defined as
\[\error_{C}(X) = \sum_{x \in X}d(x,C), \]
where $X$ is a set of points, $C$ is a set of centers and $d(x,C)$ is a an arbitrary distance function between $x$ and nearest center to it in $C$.
\end{defn}

An \error{}-fair clustering is a fair clustering that uses \error to measure group representation cost in Definition~\ref{def:fair}. 

\paragraph{Relative representation error}
\error does not take underlying distributions of demographic groups into account. However, in cases where different  groups have drastically different underlying distributions, it may be necessary to acknowledge such a difference. Consider the example provided in Figure \ref{fig:compare}, where one demographic group (orange) has a much smaller variance compared to the other (blue). Assume that the within-group distances for the orange group can be ignored compared to distance $d$. An \error-fair clustering picks $C_{\error{} \text{Fair}}$ as the center which minimizes the maximum average \error. However, such a clustering may seem unfair, as it induces a large cost on the orange group compared to its ``optimal'' representation cost, which is close to zero if the orange group's center is picked.

\begin{figure}
\centering
\includegraphics[width=\columnwidth]{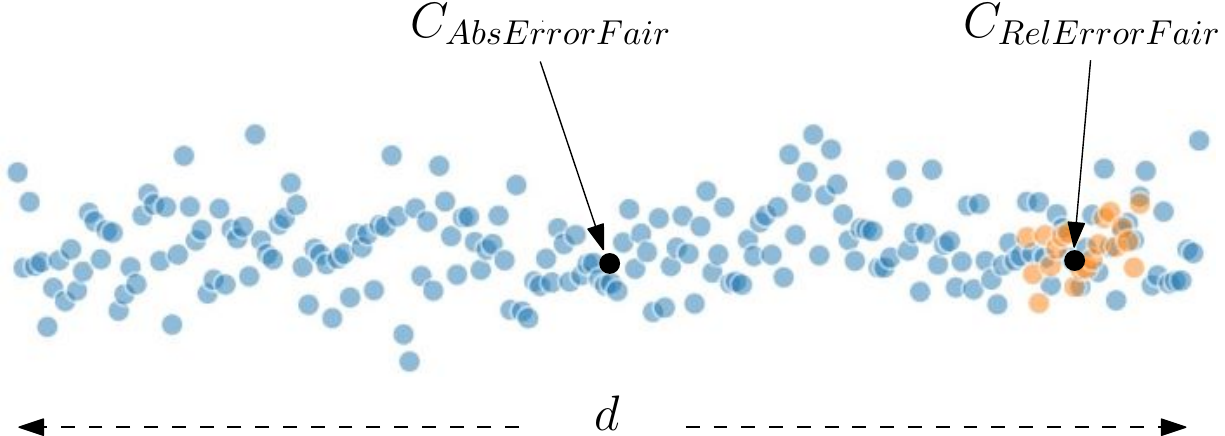}
\caption
{For groups with different underlying distributions, \error and \Ratio lead to different locations for the center.}    
\label{fig:compare}
\end{figure}
 
 The issue of differences in base distributions motivates fairness measures like \emph{equality of opportunity} based on balancing error rates rather than outcomes \cite{hardt2016equality}. As with statistical parity, we can define a natural analog in the context of representation. 
 Rather than look at the error in representation in absolute terms, we compare the average distance between members of a group and their respective cluster centers, to the corresponding ``optimal'' value for that group (if we only clustered the members of that group).

This \emph{relative} measure of representation error yields the following definition.

\begin{defn}[\Ratio]
  The relative (representation) error of a clustering is given by 
\[\Ratio_C(X) =  \frac{\sum_{x \in X}d(x,C)}{\sum_{x \in X}d(x,\opt(X))}\]
where $X$ is a set of points, $C$ is a set of centers and $d(x,C)$ is a an arbitrary distance function between $x$ and nearest center to it in $C$.
\end{defn}

One can also try to capture the relative error via a difference instead of a division.
\[\Loss_C(X) = \frac{1}{|X|} \left( \sum_{x \in X}d(x,C)-\sum_{x \in X}d(x,\opt(X)) \right) \]
This is similar to the formulation used by \cite{samadi_price_2018} in their work on fair PCA. However, in the case of clustering, this quantity can be NP-hard to compute even for a given set of centers $C$. This is because $\sum_x d(x, \opt(X))$ cannot be computed exactly. For this reason, we will not discuss this formulation further. 

\subsection{How the different measures of fairness compare} 

We have drawn on an analogy to fairness in supervised learning to formulate two measures of group representation fairness. We now make the informal case that the analogy also carries over to the incompatibility between the two notions, similar to the well-known result in the supervised learning setting \citep{kleinberg2016inherent, chouldechova2017fair}. 
To see this, consider a point set $X$ with two defined groups $A$ and $B$. Without loss of generality, assume that $B$ can be clustered better than $A$ i.e that $\cost_{opt_A}(A) > \cost_{opt_B}(B)$. Now consider Figure \ref{fig:impossible} where any clustering of $X$ is represented as a point whose $x$-coordinate is the cost $\cost_C(A)$ and whose $y$-coordinate is the cost $\cost_C(B)$. The $\error$ fair line represents all clusterings where the two groups have equal $\error$ costs and the 
$\Ratio$ fair line (which passes through the point $(\cost_{opt_A}(A),\cost_{opt_B}(B))$) represents all clusterings where the two groups have equal $\Ratio$ costs. 

Consider an optimal \emph{unconstrained} clustering represented by point $c$. Consider the position of a fair clustering under either of the above two measures relative to $c$. Clearly such a clustering would not be in either of the areas marked in red dots (because either the costs for both groups increase, making it inferior to $c$ or both costs decrease, contradicting the optimality of $c$). Further note that a clustering that is better with respect to \error must be closer to \error fair line compared to $c$, and one that is better with respect to \Ratio must be closer to \Ratio line compared to $c$. The only way in which these two notions can coincide in a single clustering is if either both groups have the same optimal cost ($\cost_{opt_A}(A) = \cost_{opt_B}(B)$) in which case the two lines coincide (analogous to the two groups having the same base distribution), or if the optimal clustering happens to achieve zero cost for both groups (analogous to the point set admitting perfect clustering). This (informal) reasoning hints at a more general incompatibility theorem (for $>2$ clusters) analogous to the works of~\citep{kleinberg2016inherent, chouldechova2017fair}, which we leave as an open direction.

\begin{figure}
\centering
\includegraphics[width=\columnwidth]{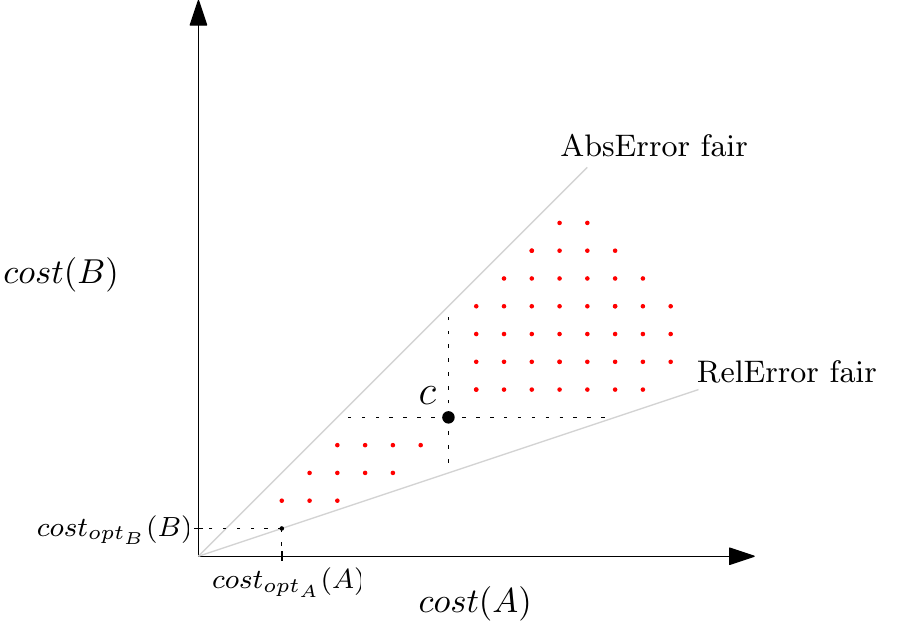}
\caption
{Unless the two groups have the same base rates, we cannot achieve fairness with respect to $\error$ and $\Ratio$.}
\label{fig:impossible}
\end{figure}

\subsection{Fairness using standard clustering methods}
\label{sec:algor-fair-clust}

Sections~\ref{sec:lp} and~\ref{sec:facloc} will present algorithms for fair clustering  under the measures of fairness introduced above. Before that, we make a simple observation about approximation guarantees we can achieve by using standard clustering algorithms with minor modification. 

If our data $X$ is composed of groups $X_1, X_2, \dots, X_m$ as we discussed, then optimizing the cost function on $X$ would correspond to the minimization problem:
\[\min_C \sum_{i \in X_i} \cost_{C}(X_i). \]

For most known cost functions and algorithms, it turns out that adding a {\em weight} for each point still allows for efficient algorithms. Let us consider assigning a weight of $1/|X_i|$ to all the points in $X_i$. Then the weighted minimization problem is
\begin{equation}
\min_C \sum_{i \in X_i} \frac{1}{|X_i|} \cost_{C}(X_i).\label{eq:weighted-problem}
\end{equation}

The observation is now the following.
\begin{obs}
Let $C$ be a clustering produced by an $\alpha$-approximation algorithm for the problem~\eqref{eq:weighted-problem}.  Then $C$ achieves an $\alpha m$ approximation to the fair clustering problem (definition~\ref{def:fair}) with the same cost function.
\label{obs:simple}
\end{obs}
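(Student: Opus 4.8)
The plan is to exploit the elementary comparison between the maximum and the sum of the per-group costs. For a clustering $C'$, write $f_i(C') = \frac{1}{|X_i|}\cost_{C'}(X_i)$ for the average representation cost of group $i$. Since there are $m$ groups and each $f_i(C') \ge 0$, we always have $\max_i f_i(C') \le \sum_{i=1}^m f_i(C') \le m \cdot \max_i f_i(C')$. In other words, the weighted objective of~\eqref{eq:weighted-problem} is sandwiched, pointwise over all clusterings, between the fair objective of Definition~\ref{def:fair} and $m$ times it.

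First I would pin down the two optima. Let $C^\star$ be an optimal solution to the fair clustering problem, so that $\opt_{\mathrm{fair}} := \max_i f_i(C^\star)$ is its value; let $\opt_{\mathrm{w}} := \min_{C'} \sum_i f_i(C')$ be the optimum of~\eqref{eq:weighted-problem}. Because both problems range over the same family $\mathcal{C}$ of center sets and use the same cost function, $C^\star$ is a feasible solution for the weighted problem, so $\opt_{\mathrm{w}} \le \sum_i f_i(C^\star) \le m \cdot \max_i f_i(C^\star) = m \cdot \opt_{\mathrm{fair}}$, where the second inequality is the right-hand bound from the previous paragraph.

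Finally I would invoke the approximation guarantee on $C$ and chain everything: by hypothesis $\sum_i f_i(C) \le \alpha \cdot \opt_{\mathrm{w}}$, hence the fair-clustering cost of $C$ satisfies $\max_i f_i(C) \le \sum_i f_i(C) \le \alpha \cdot \opt_{\mathrm{w}} \le \alpha m \cdot \opt_{\mathrm{fair}}$, which is exactly the claimed $\alpha m$-approximation. There is no real obstacle in this argument; the only point that needs a line of care is that the weighted instance~\eqref{eq:weighted-problem} really does admit the fair optimum $C^\star$ as a candidate solution (so that $\opt_{\mathrm{w}} \le m \cdot \opt_{\mathrm{fair}}$ is legitimate), which is immediate from how the two problems are defined. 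Note also that nothing about the particular cost function is used beyond nonnegativity and the fact that it is common to both formulations, so the observation applies uniformly to $k$-center, $k$-means, $k$-median, and the facility-location variants.
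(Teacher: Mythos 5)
Your proof is correct and follows essentially the same route as the paper's: bound the max by the sum, note the fair optimum is a feasible (hence upper-bounding) solution for the weighted problem, and chain the $\alpha$-approximation guarantee to get $\max_i f_i(C) \le \alpha m \cdot \opt_{\mathrm{fair}}$. Your write-up is in fact slightly more careful than the paper's, which elides the intermediate quantity $\opt_{\mathrm{w}}$ (and contains a small typo dropping the factor $\alpha$ in its final display), but the underlying argument is identical.
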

\begin{proof}
Let $\Theta$ be the objective value for the fair clustering problem, and let $C^*$ be the corresponding clustering. Then by definition, we have
\[ \max_i \frac{1}{|X_i|} \cost_{C^*} (X_i) = \Theta \implies \sum_i \frac{1}{|X_i|} \cost_{C^*} (X_i) \le m \Theta.\]
Because the solution found by the algorithm ($C$) is an $\alpha$ approximation, we have that
\[ \sum_i \frac{1}{|X_i|} \cost_{C} (X_i) \le \alpha \cdot \sum_i \frac{1}{|X_i|} \cost_{C^*} (X_i) \le m\Theta. \]
This implies that each term is $\le m\Theta$, implying the desired approximation bound.
\end{proof}

This observation implies, for example, that for the fair versions of $k$-means and $k$-median, we can use known constant factor approximation algorithms to get $O(m)$ factor approximation algorithms directly.

\section{Approximation algorithms for fair $k$-clustering}\label{sec:lp}
For the fair $k$-median and $k$-means problem, we now develop approximation algorithms by writing down a linear programming relaxation and developing a rounding algorithm.

\newcommand{\lpk}{\textsc{FairLP-\error}}
\newcommand{\lpratio}{\textsc{FairLP-\Ratio}}
\newcommand{\kmed}{\textsc{K-Med-Approx}}
\newcommand{\eps}{\epsilon}
\newcommand{\cI}{\mathcal{I}}

\subsection{Relaxation for \error{}-Fair clustering}\label{sec:lp-error}
We start by considering the $\error{}$ objective for fair clustering (definitions~\ref{def:fair} and~\ref{def:abs-error}). To recap, we have a set of points $X$ that is composed of $m$ (disjoint) groups, $X_1, X_2, \dots, X_m$. We first describe the algorithm for $k$-median, and then discuss how to adapt it to the $k$-means objective (see Remark~\ref{remark:kmeans}). 

At a high level, the algorithm aims to ``open'' a subset of the $X$ as centers, and assign the rest of the points to the closest open point. The choice of the points as well as the assignment is done by using a linear programming (LP) relaxation for the problem, and rounding the obtained solution.  The variables of the LP are as follows: for $u,v \in X$,  $z_{uv}$ is intended to denote if point $u$ is assigned to center $v$. These are called assignment variables. We also have variables $y_v$ that are intended to denote if $v$ is chosen as one of the centers (or medians).  The LP (called \lpk{}) is now the following:

\begin{align}
 \min  ~\lambda \quad & \text{ subject to} \notag\\  
  \begin{aligned}
\sum_v z_{uv} &= 1 \quad \text{for all $u$}, \notag \\
z_{uv} &\le y_v \quad \text{for all $u,v$}, \notag \\
\end{aligned}
  &\qquad
  \begin{aligned}
  \sum_{v \in X} y_v &\le k, \notag\\
0 \le z_{uv},  ~y_v &\le 1 \quad \text{for all $u, v$}.\notag \\
\end{aligned}\\
  \frac{1}{|X_i|} \cdot \sum_{u \in X_i} \sum_{v \in X} d(u, v) z_{uv} &\le \lambda \quad \text{for all groups $i$}, 
\label{eq:class-bound}
\end{align}

The only new constraint compared to the standard LP formulation for $k$-median (e.g.,~\cite{charikar2002constant}) is the constraint for all groups $i$, Eq.~\eqref{eq:class-bound}. This is to ensure that we minimize the maximum $k$-median objective over the groups. 

\begin{thm}\label{thm:igap}
The integrality gap of \lpk{} is $\ge m$.
\end{thm}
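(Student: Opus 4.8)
The plan is to exhibit, for every $m$, a family of instances on which the LP optimum $\lambda^\star$ is at most some value $\Lambda$, while every integral solution has fair-clustering cost at least $(m - o(1))\,\Lambda$; letting the instance grow then forces the gap to $m$. The natural construction mirrors the motivating picture (Figure~\ref{fig:balance}): take $m$ groups of wildly different sizes placed so that no single set of $k$ centers can simultaneously represent all of them well, yet a \emph{fractional} solution can ``spread'' a center's mass across the group locations and thereby equalize the normalized costs $\frac{1}{|X_i|}\cost_C(X_i)$ at a much smaller value of $\lambda$.

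Concretely, I would use a single ``budget'' of centers $k=1$ (or a small $k$) and $m$ well-separated location clusters $L_1,\dots,L_m$, where $L_i$ is a tight clump of $|X_i|$ points of group $i$, with the group sizes chosen very unequal, say $|X_i| = N^{i}$ for a large parameter $N$, and pairwise distances between the $L_i$ all equal to $1$. In the fractional LP, one may set $y_v = 1/m$ at one representative point of each $L_i$ (so $\sum_v y_v = 1 = k$), and assign each point $u\in X_i$ the mass $z_{uv}=1/m$ to the representative of each $L_j$; then $\sum_v d(u,v)z_{uv} = \frac{m-1}{m}$ for every $u$, so the normalized cost $\frac{1}{|X_i|}\sum_{u\in X_i}\sum_v d(u,v)z_{uv}=\frac{m-1}{m}$ is the \emph{same} for every group, giving $\lambda^\star \le \frac{m-1}{m}$. (One must double-check the constraint $z_{uv}\le y_v$, which holds with equality here, and that within-clump distances are negligible, i.e.\ set them to $0$ or $\to 0$; if one insists on distinct points, scale the clumps to have diameter $\to 0$.) On the integral side, with $k=1$ the single open center lies in exactly one $L_{i^\star}$, so every other group $X_j$ ($j\neq i^\star$) pays $\cost_C(X_j)=|X_j|$ and hence normalized cost $1$; thus the fair objective $\max_i \frac{1}{|X_i|}\cost_C(X_i) = 1$. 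The ratio is $1 / \frac{m-1}{m} = \frac{m}{m-1}$, which is only $>1$ but does not reach $m$ — so this first attempt is too weak, and the real work is in choosing the geometry so that the integral solution is forced to be a factor $\approx m$ worse than the fractional one, not merely $\tfrac{m}{m-1}$ worse.

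The fix, which I expect to be the crux, is to make the fractional solution able to drive \emph{every} group's normalized cost down to (nearly) $0$ while any integral solution still pays $\Omega(1)$ for the worst group. For this I would give each group its \emph{own} optimal location served cheaply, but make these locations mutually far, and crucially let $k$ be large enough that fractionally one can place $y$-mass $1/m$ at the ideal spot for each of the $m$ groups (using a total of, effectively, one unit of center budget split $m$ ways among what would otherwise be $m$ separate centers), yet integrally $k$ is one short of being able to cover all $m$ groups — say the instance genuinely needs $m\cdot t$ centers to serve all groups at cost $0$ but $k = m\cdot t - (t\text{-ish})$, forcing at least one group entirely onto a far center. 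Then $\lambda^\star$ can be pushed to any $\eps>0$ by making the far fractional contribution small, while every integral clustering has some group at normalized cost $\ge c$ for an absolute constant $c$; but this gives an \emph{unbounded} gap, which is even stronger than $m$ and suggests I am conflating two different instances. So the honest approach is: pick the construction so the \emph{fractional} optimum is exactly a $1/m$-fraction of what a single integral center costs the worst group, which happens precisely when one fractional ``center'' can be split into $m$ pieces, one per group. I would therefore set $k$ so that the integral solution can afford one center for $\lceil k/1\rceil$-worth of groups but the fractional solution, by opening each candidate to extent $1/m$, effectively multiplies its reach by $m$; balancing so that fractional worst-group cost $= \Lambda$ and integral worst-group cost $= m\Lambda$ is the calculation I would actually carry out, and verifying the $y$- and $z$-constraints hold for the fractional point (in particular $\sum_v y_v \le k$ and $z_{uv}\le y_v$) is the step most likely to need care.

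A cleaner route, and the one I would present if it checks out: use $m$ groups with sizes in a geometric progression so that the \emph{normalization} $\frac{1}{|X_i|}$ does the amplification. Put all of group $i$ at a single site $s_i$, with $d(s_i,s_j)=1$ for $i\ne j$, and let $k=1$ but now replace ``points of $X$'' as candidate centers by also allowing the LP to open $y$-mass at each $s_i$; the fractional solution opens $y_{s_i}=1/m$ and matches each group to its own site to extent $1/m$ and spreads the remaining $1-1/m$ over all sites, yielding a common normalized cost that I can tune. The integral solution opens at one $s_{i^\star}$ only; its cost on group $j\ne i^\star$ is $|X_j|$, normalized $1$, so the objective is $1$, while with the right spreading the fractional objective is $\le 1/m$ (the $1/m$ comes from: each group sends exactly a $1/m$ fraction of its assignment mass to ``the wrong place at distance $1$,'' \emph{after} accounting that $k$ fractional units, each opened to extent $1/m$, behave like a single unit split $m$ ways). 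I would then take a limit over instance parameters to push the ratio to exactly $m$, and close by remarking this matches the $O(m)$ upper bound implied by Observation~\ref{obs:simple}, so the bound is tight up to constants. The main obstacle throughout is getting a \emph{feasible} fractional solution whose objective is a genuine $1/m$-factor below the best integral one — i.e.\ correctly exploiting that the center budget $k$ interacts multiplicatively with the fractional opening level $1/m$ — rather than merely a $\tfrac{m}{m-1}$-factor.
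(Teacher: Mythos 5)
There is a genuine gap, and it is exactly the one you flag at the end as "the step most likely to need care": none of your constructions actually produces a feasible fractional solution whose objective is an $m$-factor below the best integral one. Your first attempt ($k=1$, $m$ far-apart clumps, $y=1/m$ everywhere) correctly yields only a gap of $\tfrac{m}{m-1}$, and your "cleaner route" does not escape this. With $k=1$ the constraint $\sum_v y_v \le 1$ forces $\sum_i y_{s_i} \le 1$, and since $\sum_v z_{uv}=1$ with $z_{uv}\le y_v$, a point of group $i$ can route at most $y_{s_i}$ of its mass to its own site and must send at least $1-y_{s_i}$ to sites at distance $1$; as some group has $y_{s_i}\le 1/m$, the fractional objective is at least $\tfrac{m-1}{m}$, not the $\le 1/m$ you assert. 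No amount of "spreading," geometric group sizes, or limiting arguments fixes this: the normalization $\tfrac{1}{|X_i|}$ provides no amplification when each group sits at a single site, and the bottleneck is purely the opening budget.

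The missing idea is to take the budget to the \emph{other} extreme: $k=m-1$ with $m$ singleton groups $X_i=\{x_i\}$ at mutual distance $D$. Fractionally one sets $y_{x_i}=1-\tfrac1m$ for every $i$ (total $m-1=k$), lets each point serve itself to extent $1-\tfrac1m$ at distance $0$, and routes only the residual $\tfrac1m$ to one other point at distance $D$; all constraints hold and $\lambda = D/m$. Integrally, $m-1$ centers among $m$ mutually far singletons must leave one group entirely unserved locally, so the objective is at least $D$, giving the gap $m$ exactly. The principle you half-articulate ("$k$ interacts multiplicatively with the fractional opening level") is right, but it cashes out as \emph{each site being fractionally almost fully open while one site is integrally fully closed}, which requires $k$ just below the number of groups rather than $k=1$.
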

\begin{proof}
Consider an instance in which we have $m$ groups, each consisting of a single point. Formally, let $X_i = \{x_i\}$ for all $i \in [m]$. Suppose that $d(x_i, x_j) = D$ for all $i \ne j$, and let $k=m-1$.

Now, consider the fractional solution in which $y_i = 1-\frac{1}{m}$ for all $i$. Also, let $z_{ii} = 1-\frac{1}{m}$, and let $z_{ij} = 1/m$ for some $j \ne i$ (it does not matter which one). It is easy to see that this solution satisfies all the constraints. Moreover, the LP objective value is $\lambda = D/m$.

However, in any integral solution, one of the points is not chosen as a center, and thus the objective value is at least $D$. Thus the integrality gap is $\ge m$.
\end{proof}

Theorem~\ref{thm:igap} makes it difficult for an LP based approach to give an approximation factor better than $m$ (which is easy to obtain as we saw in Observation~\ref{obs:simple}. 
However, the LP can still be used to obtain a ``bi-criteria'' approximation, where we obtain a constant approximation to the objective, while opening slightly more than $k$ clusters. 

\begin{thm}\label{thm:bicriteria}
Consider a feasible solution $(z, y)$ for \lpk{} with objective value $\lambda$.  For any $\eps > 0$, there is a rounding algorithm that produces an integral solution $(\zbar, \ybar)$ such that $\sum_v \ybar_v \le k/(1-\eps)$, all other constraints of the LP are satisfied, and the objective value is $\le 2\lambda/\eps$.
\end{thm}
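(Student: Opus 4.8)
The plan is to use a filtering-plus-greedy-clustering argument in the spirit of Lin--Vitter and Charikar--Guha--Tardos--Shmoys, tuned so that we lose only a $1/(1-\epsilon)$ factor in the number of opened centers (rather than a constant factor). For each point $u \in X$, set $R_u := \sum_{v} d(u,v)\, z_{uv}$, the fractional assignment cost of $u$ under $(z,y)$. Viewing $(z_{uv})_v$ as a probability distribution over $v$, Markov's inequality gives $\sum_{v\,:\, d(u,v) > R_u/\epsilon} z_{uv} \le \epsilon$, so the closed ball $B_u := \{v \in X : d(u,v) \le R_u/\epsilon\}$ satisfies $\sum_{v \in B_u} z_{uv} \ge 1-\epsilon$, and hence $\sum_{v \in B_u} y_v \ge 1-\epsilon$ using $z_{uv}\le y_v$ (the degenerate case $R_u = 0$ is fine: then every $v$ with $z_{uv}>0$ is coincident with $u$, so $B_u$ has $y$-mass at least $1$).

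Next I would process the points of $X$ in nondecreasing order of $R_u$, maintaining an initially empty set $\bar C$ of opened centers: when $u$ is reached, add $u$ to $\bar C$ if no previously opened center lies within distance $2R_u/\epsilon$ of $u$, and skip $u$ otherwise. Set $\bar y_v = 1$ iff $v \in \bar C$, and let $\bar z$ assign each point to its nearest opened center; this manifestly satisfies the assignment constraints, $\bar z \le \bar y$, and the box constraints. Two observations complete the argument. First, $|\bar C| \le k/(1-\epsilon)$: if $u, u' \in \bar C$ with $R_u \le R_{u'}$ (so $u$ was opened first), the greedy rule forced $d(u,u') > 2R_{u'}/\epsilon \ge R_u/\epsilon + R_{u'}/\epsilon$, so $B_u$ and $B_{u'}$ are disjoint; since the balls $\{B_u\}_{u \in \bar C}$ are pairwise disjoint, each carries $y$-mass at least $1-\epsilon$, and $\sum_v y_v \le k$, we get $(1-\epsilon)|\bar C| \le k$. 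Second, every point $w$ satisfies $d(w,\bar C) \le 2R_w/\epsilon$: either $w \in \bar C$ (distance $0$), or $w$ was skipped because some earlier-processed $u \in \bar C$, necessarily with $R_u \le R_w$, had $d(u,w) \le 2R_w/\epsilon$. Consequently, for every group $i$,
\[ \frac{1}{|X_i|}\sum_{w \in X_i} d(w,\bar C) \;\le\; \frac{2}{\epsilon}\cdot\frac{1}{|X_i|}\sum_{w\in X_i} R_w \;\le\; \frac{2}{\epsilon}\,\lambda, \]
where the last inequality is exactly constraint~\eqref{eq:class-bound}. Thus $(\bar z,\bar y)$ opens at most $k/(1-\epsilon)$ centers and has objective value at most $2\lambda/\epsilon$.

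The routine parts are the Markov estimate and the verification of the LP constraints. The one point that needs care is the choice of the greedy threshold $2R_u/\epsilon$: it must be large enough that the $y$-mass balls around the opened centers remain pairwise disjoint (which is what makes the center-count argument work), but this very same threshold also upper-bounds the assignment distance of every skipped point, so it cannot be taken any larger without degrading the $2\lambda/\epsilon$ bound; getting both to hold simultaneously with the stated constants is the crux. A related subtlety worth flagging is that processing points in increasing order of $R_u$ is precisely what guarantees that the center covering a skipped point $w$ has $R_u \le R_w$, which is what lets us charge $w$'s assignment cost to $R_w$ alone rather than to a larger radius.
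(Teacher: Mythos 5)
Your proof is correct and follows essentially the same route as the paper: the Lin--Vitter-style filtering that greedily selects centers in increasing order of the fractional connection cost $R_u$, uses Markov's inequality to show each ball $B(u, R_u/\epsilon)$ carries $y$-mass at least $1-\epsilon$, and exploits disjointness of these balls to bound the number of opened centers by $k/(1-\epsilon)$ while charging each skipped point a distance of at most $2R_u/\epsilon$. The only (harmless) additions are your explicit handling of the $R_u=0$ case and the remark on the tightness of the greedy threshold.
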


There has been extensive literature on the problem of rounding LPs for problems like $k$-median (see, e.g.,~\cite{li2016approximating}). However, due to our additional fairness condition, we are interested in rounding schemes with an additional property, that we define below.

\begin{defn}[Faithful rounding]\label{def:faithful}
A rounding procedure for \lpk{} is said to be $\alpha$-faithful if it takes a fractional solution $(z, y)$ and produces an integral solution $(\hat{z}, \hat{y})$ with the guarantee that for \emph{every} $u \in X$, 
\[ \sum_{v \in X} d(u,v)~ \hat{z}_{uv} \le \alpha \cdot \sum_{v \in X} d(u,v) ~z_{uv}.\]
A weaker notion that holds for some known rounding schemes is the following: a rounding algorithm is said to be \emph{$\alpha$-faithful in expectation} if
\[ \mathbb{E} \left[ \sum_{v \in X} d(u,v)~ \hat{z}_{uv} \right] \le \alpha \cdot \sum_{v \in X} d(u,v) ~z_{uv}.\]
\end{defn}

The advantage of having a faithful rounding procedure is that it automatically gives a per-group (indeed, a per-point) guarantee on the objective value, which enables us to prove the desired approximation bound.

\begin{proof}[Proof of Theorem~\ref{thm:bicriteria}]
The proof is based on the well-known ``filtering'' technique (\cite{Lin1992approximation, charikar2002constant}). Define $R_u$ to be the {\em fractional connection cost} for the point $u$, formally, $R_u = \sum_v d(u,v) ~z_{uv}$.
Now, construct a subset $S$ of the points $X$ as follows. Set $S = \emptyset$ and $T=X$ to begin with, and in every step, find $u \in T$ that has the smallest $R_u$ value (breaking ties arbitrarily) and add it to $S$. Then, remove all $v$ such that $d(u, v) \le 2R_v / \eps$ from the set $T$. Suppose we continue this process until $T$ is empty.

The set $S$ obtained satisfies the following property: for all $u, v \in S$, $d(u, v) \ge (2/\eps) \cdot \max \{R_u, R_v\}$. This is true because if $u$ was added to $S$ before $v$, then $R_u \le R_v$, and further, $v$ should not have been removed from $T$, which gives the desired bound. The property above implies that the set of metric balls $\{ B(u, R_u/\eps) \}_{u \in S}$ are all disjoint. 

Next, we observe that each such ball contains a fractional $y$-value (in the original LP solution) of at least $(1-\eps)$.  This is by a simple application of Markov's inequality. By definition,
$R_u = \sum_{v} d(u, v) ~z_{uv}$, and thus $\sum_{v \not\in B(u, R_u/\eps)} z_{uv} \le \eps$. This means that $\sum_{v \in B(u, R_u/\eps)} x_{uv} \ge 1-\eps$, and thus $\sum_{v \in B(u, R_u/\eps)} y_v \ge 1-\eps$. As the balls are disjoint, we have that $|S| \le k/(1-\eps)$. 

Now, consider an algorithm that opens all the points of $S$ as centers. By construction, all $v \not\in S$ are at a distance $\le 2R_v/\eps$ from some point in $S$, and thus for any group $i$, we have that 
$\sum_{u \in X_i} d(u, S) \le \sum_{u \in X_i} 2R_u/\eps \le 2\lambda/\eps$. Setting $\ybar$ to be the indicator vector for $S$ and $\zbar$ to be the assignment mapping each point to the closest neighbor in $S$, the theorem follows.
\end{proof} 

\begin{remark}[Extension to $k$-means]\label{remark:kmeans}
The argument above can be extended easily to obtain similar results for the $k$-means objective. We simply replace all distances with the squared distances. The metric ball around each point can be replaced with the $\ell_2^2$ ball, and the same approximation factors can be shown to hold.
\end{remark}

\paragraph{Beyond bi-criteria: an LP based heuristic.} While Theorem~\ref{thm:bicriteria} can achieve $\sum \ybar_v = k/(1-\eps)$ for any $\eps >0$, it is still weaker than what is possible for the standard $k$-median problem. The integrality gap instance shows that this is unavoidable in the worst case (using this LP). 
However, it turns out that there exist rounding algorithms for $k$-median that are faithful in expectation (as in Definition~\ref{def:faithful}), and end up with $\sum \ybar_v$ being precisely $k$. Specifically,
\begin{thm}[\cite{charikar2012dependent}]\label{thm:charikar-li}
There exists a rounding algorithm for \lpk{} that is $\alpha$-faithful in expectation with $\alpha \le 4$, and outputs precisely $k$ clusters.
\end{thm}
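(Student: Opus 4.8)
The plan is to invoke the dependent LP-rounding framework of Charikar and Li~\cite{charikar2012dependent}, and to observe along the way that its analysis is inherently \emph{per client}, which is exactly what Definition~\ref{def:faithful} asks for. The starting point is the same filtering step used in the proof of Theorem~\ref{thm:bicriteria}: using the fractional connection costs $R_u = \sum_v d(u,v)\, z_{uv}$, select a set of well-separated ``representatives,'' and around each representative $j$ form a \emph{bundle} $U_j$ of nearby fractional facility mass whose total $y$-weight lies in $[1/2,1]$. The bundles are pairwise disjoint, and by the separation property the representative of a bundle is close --- in units of the relevant $R$ values --- to every client that ``points'' to it.

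Next I would carry out the two combinatorial steps of Charikar--Li. First, a \emph{matching} step pairs up bundles so that, within each pair, the event ``at least one facility is opened'' can later be made to hold deterministically; unmatched bundles are handled individually. Second, a \emph{dependent rounding} step (a randomized pipage-style procedure, rounding the paired variables first) rounds the consolidated fractional $y$-values to $\{0,1\}$ so that (i) exactly $k$ facilities are opened, (ii) each marginal is preserved, i.e.\ the probability that a given bundle contributes an open facility equals its fractional weight, and (iii) for every matched pair at least one of the two bundles opens a facility. These three properties are the crux of the construction.

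It then remains to bound, for each client $u$, the expectation $\mathbb{E}\big[\sum_v d(u,v)\, \hat z_{uv}\big]$ against $R_u$. Fix $u$ and let $j$ be the bundle $u$ points to. With probability at least the weight of $U_j$ (which is $\ge 1/2$) a facility inside $U_j$ opens and $u$ is routed there at cost $O(R_u)$; otherwise $u$ is routed to the matched partner bundle $j'$, guaranteed nonempty in that case, whose representative is at distance controlled by the representative of $j$ and hence by $R_u$; a final fallback covers the remaining low-probability case. Summing the small per-case distances weighted by the explicitly computable probabilities gives $\mathbb{E}\big[\sum_v d(u,v)\, \hat z_{uv}\big] \le \alpha R_u$ with $\alpha$ a universal constant --- Charikar--Li optimize the filtering thresholds to push $\alpha$ down to roughly $3.25$, and in particular $\alpha \le 4$ suffices here. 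Taking $\hat z$ to be the induced assignment of each point to its nearest open facility finishes the argument.

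The main obstacle is the dependent rounding step itself: one must exhibit a randomized procedure that simultaneously enforces the hard cardinality constraint ($\sum_v \hat y_v = k$), preserves all marginals \emph{exactly}, and makes the per-pair ``at least one open'' event hold with probability one. Getting all three at once is what forces the pipage moves to be scheduled so that paired variables are resolved before the rest, and it is the part of the Charikar--Li argument that does not follow from generic rounding theorems. A secondary point to check is that our LP \lpk{} differs from the standard $k$-median LP only in the group constraints~\eqref{eq:class-bound}; since the rounding above never consults those constraints, the $\alpha$-faithful-in-expectation guarantee together with linearity of expectation shows each constraint in~\eqref{eq:class-bound} is met in expectation up to the factor $\alpha$, which is all that is needed downstream.
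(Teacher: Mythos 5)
Your proposal is correct and follows essentially the same route as the paper, which states this result as a citation to Charikar--Li and describes the identical filtering/bundling/matching/dependent-sampling pipeline in its experiments section; your observation that the Charikar--Li analysis is inherently per-client (hence faithful in expectation) and that the group constraints~\eqref{eq:class-bound} are never consulted by the rounding is exactly the point the paper relies on in Corollary~\ref{corr:faithful}. The only nit is that the dependent rounding opens \emph{exactly} $k$ facilities (not merely $k$ in expectation), which you state correctly, and the quoted constant $3.25$ is the overall $k$-median approximation factor rather than necessarily the tight per-client constant --- but $\alpha \le 4$ is safely covered by the cited analysis either way.
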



\begin{corr}
\label{corr:faithful}
Let $(z, y)$ be a solution to $\lpk{}$. There exists a rounding algorithm that (a) produces precisely $k$ clusters, and (b) ensures that the expected connection cost for every group is $\le 4 \lambda$.
\end{corr}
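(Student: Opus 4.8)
The plan is to treat this as an immediate consequence of Theorem~\ref{thm:charikar-li} together with the fairness constraint~\eqref{eq:class-bound}, the point being that \emph{faithfulness in expectation is a per-point statement}, so it is preserved under summation over the arbitrary subset of points that constitutes a group. First I would apply Theorem~\ref{thm:charikar-li} to the given feasible LP solution $(z,y)$ to obtain an integral solution $(\hat z, \hat y)$ with $\sum_v \hat y_v = k$; this gives part~(a) directly. It then remains only to bound the expected cost incurred by each group.

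For part~(b), set $R_u := \sum_v d(u,v)\, z_{uv}$, the fractional connection cost of $u$. By Definition~\ref{def:faithful} and the $4$-faithfulness (in expectation) guaranteed by Theorem~\ref{thm:charikar-li}, we have $\mathbb{E}\big[\sum_v d(u,v)\,\hat z_{uv}\big] \le 4 R_u$ for every $u \in X$. Summing over $u \in X_i$ and using linearity of expectation,
\[
\mathbb{E}\Big[ \sum_{u \in X_i} \sum_{v} d(u,v)\,\hat z_{uv} \Big]
\;\le\; 4 \sum_{u \in X_i} R_u
\;=\; 4 \sum_{u \in X_i} \sum_{v} d(u,v)\, z_{uv}
\;\le\; 4\, |X_i|\, \lambda,
\]
where the final inequality is exactly constraint~\eqref{eq:class-bound}. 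Dividing by $|X_i|$ shows the expected normalized connection cost of group $X_i$ is at most $4\lambda$, and since this holds for every $i$ we are done. I would add the small remark that although $\hat z$ need not assign each point to its nearest open center, reassigning every point to its closest center in $\hat y$ (the Voronoi assignment implicit in Definition~\ref{def:fair}) only decreases each point's connection cost, so the bound transfers to $\cost_{\hat C}(X_i)$.

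There is essentially no real obstacle here: the corollary is a packaging of a black-box rounding result. The one conceptual subtlety worth flagging — and the reason we can avoid the bi-criteria detour of Theorem~\ref{thm:bicriteria} and get exactly $k$ centers — is that the $k$-median rounding of~\cite{charikar2012dependent} yields a per-point expected cost guarantee rather than only an aggregate one, which is precisely the property isolated in Definition~\ref{def:faithful} and which makes the per-group inequality above go through.
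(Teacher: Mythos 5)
Your proposal is correct and follows exactly the paper's own argument: the paper likewise derives the corollary directly from Theorem~\ref{thm:charikar-li} via linearity of expectation over each group, using the constraint~\eqref{eq:class-bound} to bound the per-group fractional cost by $|X_i|\lambda$. Your additional remarks on the per-point nature of faithfulness and the Voronoi reassignment are consistent elaborations of the same reasoning.
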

The corollary follows directly from Theorem~\ref{thm:charikar-li}, by linearity of expectation. This does not guarantee that the rounding {\em simultaneously} produces a small connection cost for all groups, but it gives a good heuristic rounding algorithm. In examples where every group has many points well-distributed across clusters, the costs tend to be concentrated around the expectation, leading to small connection costs for all clusters. We will see this via examples in the experiments section \ref{sec:experiment-kmedian}.

\subsection{Relaxation for \Ratio{}-Fair clustering}\label{sec:lp-ratio}
We now show that the rounding methods introduced in Section~\ref{sec:lp-error} can also be used for \Ratio{}-fair clustering. We consider the following auxiliary LP:

\begin{align}
\min~\lambda \quad &\text{ subject to} \notag\\  
\begin{aligned}
\sum_v z_{uv} &= 1 \quad \text{for all $u$}, \notag \\
z_{uv} &\le y_v \quad \text{for all $u,v$}, \notag \\
\end{aligned}&\qquad
  \begin{aligned}
\sum_{v \in X} y_v &\le k, \notag\\    
0 \le z_{uv},  ~y_v &\le 1 \quad \text{for all $u, v$},\notag\\
\end{aligned}\\
\sum_{u \in X_i} \sum_{v \in X} d(u, v) z_{uv} &\le \lambda \cdot \kmed (X_i) \text{ for all $i$.}
\label{eq:class-bound-ratio}
\end{align}
The constraint~\eqref{eq:class-bound-ratio} now involves a new term, $\kmed (X_i)$, which is an approximation to the optimum $k$-median objective of the set $X_i$.  For our purposes, we do not care how this approximation is achieved --- it can be via an LP relaxation~\cite{charikar2002constant, Li2013approximating}, local search~\cite{arya2004local, gupta2008}, or any other method.  We assume that if $\vartheta_i$ is the optimum $k$-median objective for $X_i$, then for all $i$, $\frac{\kmed(X_i)}{\vartheta_i} \le \rho$ for some constant $\rho$. (From the works above, we can even think of $\rho$ as being $\le 3$.)

\begin{lemma}\label{lem:ratio-approx}
Suppose there is a rounding procedure that takes a solution $(\lambda, z, y)$ to \lpratio{} and outputs a set of centers $S$ with the property that for some parameter $\eta$, 
\begin{equation}
\label{eq:ratio-class-bound} \sum_{u \in X_i} d(u, S) \le \eta \lambda \cdot \kmed(X_i) \quad \text{for all groups $i$}.
\end{equation}
Then, this algorithm provides an $\eta \cdot \rho$ approximation to \Ratio{}-fair clustering.
\end{lemma}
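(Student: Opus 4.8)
The plan is to chain three inequalities: (i) the optimal \Ratio{}-fair clustering gives a feasible point of \lpratio{} whose objective underestimates the optimal \Ratio{}-fair value, so the LP optimum $\lambda^\star$ is small; (ii) the rounding hypothesis~\eqref{eq:ratio-class-bound} bounds the per-group connection cost of the rounded set $S$ in terms of $\lambda^\star$ and $\kmed(X_i)$; and (iii) the guarantee $\kmed(X_i) \le \rho\,\vartheta_i$ converts this into a bound in terms of the per-group optimum $\vartheta_i = \sum_{x \in X_i} d(x,\opt(X_i))$, which is exactly the denominator appearing in $\Ratio$. The algorithm under analysis is ``solve \lpratio{}, then apply the assumed rounding procedure''.

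In more detail, I would let $C^\star$ be an optimal \Ratio{}-fair clustering and set $\Theta := \max_i \Ratio_{C^\star}(X_i) = \max_i \frac{1}{\vartheta_i}\sum_{u\in X_i} d(u,C^\star)$. I would then encode $C^\star$ as an integral LP solution ($y_v = 1$ iff $v \in C^\star$, and $z_{uv} = 1$ for the center $v \in C^\star$ nearest $u$); this clearly meets the assignment, cardinality, and box constraints. For the group constraint~\eqref{eq:class-bound-ratio} one has $\sum_{u\in X_i}\sum_v d(u,v)z_{uv} = \sum_{u\in X_i} d(u,C^\star) \le \Theta\,\vartheta_i \le \Theta\cdot\kmed(X_i)$, where the last step uses $\vartheta_i \le \kmed(X_i)$ — true because $\kmed(X_i)$ is the cost of a genuine $k$-median solution on $X_i$, hence at least the optimum $\vartheta_i$. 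Thus $(\Theta, z, y)$ is feasible for \lpratio{}, so its optimal objective satisfies $\lambda^\star \le \Theta$.

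Finally, apply the assumed rounding procedure to an optimal LP solution $(\lambda^\star, z, y)$. By~\eqref{eq:ratio-class-bound}, the resulting center set $S$ satisfies, for every group $i$,
\[ \sum_{u\in X_i} d(u,S) \;\le\; \eta\,\lambda^\star\,\kmed(X_i) \;\le\; \eta\,\Theta\,\rho\,\vartheta_i, \]
using $\lambda^\star \le \Theta$ and $\kmed(X_i) \le \rho\,\vartheta_i$. Dividing by $\vartheta_i$ yields $\Ratio_S(X_i) \le \eta\rho\,\Theta$ for all $i$, hence $\max_i \Ratio_S(X_i) \le \eta\rho\,\Theta$, i.e. $S$ is an $\eta\rho$-approximation to \Ratio{}-fair clustering.

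This is essentially a bookkeeping chain, so I do not expect a genuine obstacle; the one step that needs care is (i) — confirming that the optimal integral fair clustering is actually feasible for \lpratio{}, which relies on $\kmed(X_i)$ being an \emph{upper} bound on $\vartheta_i$ so that replacing $\vartheta_i$ by $\kmed(X_i)$ on the right-hand side of~\eqref{eq:class-bound-ratio} only relaxes the constraint. I would state this upper-bound property explicitly (alongside the already-assumed lower bound $\kmed(X_i) \le \rho\,\vartheta_i$) at the point where $\kmed$ is introduced.
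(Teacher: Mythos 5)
Your proposal is correct and follows essentially the same route as the paper's proof: show the LP optimum lower-bounds the optimal \Ratio{}-fair value via feasibility of the integral optimum (using $\vartheta_i \le \kmed(X_i)$), then combine the rounding guarantee with $\kmed(X_i) \le \rho\,\vartheta_i$ to get the $\eta\rho$ factor. Your explicit remark that $\kmed(X_i)$ must upper-bound $\vartheta_i$ is exactly the step the paper also invokes, so there is no gap.
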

\begin{proof}
Let $\opt$ be the optimum value of the ratio-fair objective on the instance $\{X_1, \dots, X_m\}$.  The main observation is that the LP provides a lower bound on $\opt$. This is true because any solution to ratio-fair clustering leads to a feasible integral solution to \lpratio{}, where the RHS of the constraint~\eqref{eq:class-bound-ratio} is replaced by $\lambda \cdot \vartheta_i$.  Since $\vartheta_i \le \kmed(X_i)$, it is also feasible for \lpratio{}, showing that the optimum LP value is $\le \opt$.

Next, consider a rounding algorithm that takes the optimum LP solution $(\lambda^*, z^*, y^*)$ and produces a set $S$ that satisfies~\eqref{eq:ratio-class-bound} (with $\lambda^*$ replacing $\lambda$ on the RHS). Then, since $\kmed(X_i) \le \rho \vartheta_i$, we have
\[ \sum_{u \in X_i} d(u, S) \le (\rho \eta) \lambda^* \cdot \vartheta_i \quad \text{for all groups $i$},\]
and using $\lambda^* \le \opt$ completes the proof of the lemma.
\end{proof}

Thus, it suffices to develop a rounding procedure for \lpratio{} that satisfies~\eqref{eq:ratio-class-bound}. Here, we observe that the rounding from Theorem~\ref{thm:bicriteria} directly applies (because ensures that every $u \in X$, $d(u, S) \le 2 R_u/\epsilon$), giving us the same bi-criteria guarantee (and the same adjustment under faithful rounding). 

\begin{corr}[Corollary to Theorem~\ref{thm:bicriteria}]\label{corr:ratio}
For any $\eps>0$, there is an efficient algorithm that outputs $k/(1-\eps)$ clusters and achieves a $(6/\eps)$ approximation to the optimum  value of the \Ratio{} objective.
\end{corr}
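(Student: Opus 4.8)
The plan is to simply assemble three ingredients that are already in place: the LP relaxation \lpratio{}, the filtering-based rounding of Theorem~\ref{thm:bicriteria}, and Lemma~\ref{lem:ratio-approx}. First I would precompute the values $\kmed(X_i)$ by running any constant-factor $k$-median approximation separately on each group $X_i$ (so that $\rho \le 3$), write down \lpratio{} with these constants in the right-hand side of~\eqref{eq:class-bound-ratio}, and solve it to optimality, obtaining a fractional solution $(\lambda^*, z^*, y^*)$. This is a polynomial-size LP, so this step is efficient.

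Next I would feed $(z^*, y^*)$ into the rounding algorithm of Theorem~\ref{thm:bicriteria}. The point to emphasize is that I need that theorem in its \emph{per-point} form: its proof actually shows that the output set $S$ satisfies $|S| \le k/(1-\eps)$ and, for \emph{every} $u \in X$, $d(u,S) \le 2R_u/\eps$, where $R_u = \sum_v d(u,v)\, z^*_{uv}$ is the fractional connection cost. The heart of the argument is then a one-line summation over each group $i$:
\[ \sum_{u \in X_i} d(u,S) \;\le\; \frac{2}{\eps} \sum_{u \in X_i} R_u \;=\; \frac{2}{\eps} \sum_{u \in X_i}\sum_{v \in X} d(u,v)\, z^*_{uv} \;\le\; \frac{2}{\eps}\, \lambda^* \cdot \kmed(X_i), \]
where the final inequality is exactly the LP constraint~\eqref{eq:class-bound-ratio}. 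Thus the hypothesis~\eqref{eq:ratio-class-bound} of Lemma~\ref{lem:ratio-approx} is met with $\eta = 2/\eps$.

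Finally I would invoke Lemma~\ref{lem:ratio-approx}, which turns a rounding satisfying~\eqref{eq:ratio-class-bound} into an $\eta\rho$ approximation for the \Ratio{} objective. With $\eta = 2/\eps$ and $\rho \le 3$ this is a $6/\eps$ approximation, and the number of opened centers, $k/(1-\eps)$, is inherited verbatim from Theorem~\ref{thm:bicriteria}. Lemma~\ref{lem:ratio-approx} also already contains the observation that \lpratio{} is feasible and lower-bounds the optimum \Ratio{} value (via $\vartheta_i \le \kmed(X_i)$), so nothing further is needed on that front.

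I do not anticipate a genuine obstacle: the corollary is essentially a bookkeeping composition. The one subtlety worth stating carefully is that Theorem~\ref{thm:bicriteria} must be applied through its stronger per-point guarantee $d(u,S)\le 2R_u/\eps$ rather than the per-group bound $\sum_{u\in X_i} d(u,S)\le 2\lambda/\eps$ it is phrased with in the \error{} setting, since only the per-point version lets the group sums line up against the $\kmed(X_i)$-weighted right-hand sides of \lpratio{}.
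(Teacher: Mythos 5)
Your proposal is correct and follows essentially the same route as the paper: the paper likewise observes that the filtering rounding of Theorem~\ref{thm:bicriteria} gives the per-point guarantee $d(u,S) \le 2R_u/\eps$, sums it against the $\kmed(X_i)$-weighted constraints of \lpratio{} to satisfy~\eqref{eq:ratio-class-bound} with $\eta = 2/\eps$, and concludes via Lemma~\ref{lem:ratio-approx} with $\rho \le 3$. The subtlety you flag about needing the per-point rather than per-group form of the rounding guarantee is exactly the point the paper makes parenthetically.
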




\section{Facility Location}
\label{sec:facloc}
As we discussed earlier, the objective in $k$-means and $k$-median clustering measures how close the points are (on average) to their cluster centers. There can be situations in which this objective is more important than having a strict bound on the number of clusters produced. Consider the example we saw earlier, where points correspond to clients located in a metric space, and we open a {\em facility}  at the cluster center with the goal of {\em serving} the clients in the cluster. In such a context, it is reasonable to open more facilities if it serves clients better, as long as they collectively ``pay'' for opening. This is the motivation for the well-known facility location problem~\cite{cornuejols1983uncapicitated}. 

\begin{defn}\label{def:facility-location}
Let $X$ be a set of clients and $\calL$ be a set of locations in a metric space with distance function $d$. For each location $v$, we have an {\em opening cost} $f_v$, which is the cost of opening a facility at $v$. The objective is now a sum of the {\em connection costs} and the {\em opening costs}. Formally, the goal is to select a subset $L$ of $\calL$, so as to minimize
\[ \sum_{u \in X} d(u, L) + \sum_{v \in L} f_v. \]
\end{defn}

Now consider the setting in which clients fall into different demographic groups. We propose our objective based on an {\em equal division} of the facility opening costs among all the clients. Thus, every client pays an opening cost of $\frac{1}{|X|} \sum_{v \in L} f_v$. Our definition of fair facility location aims to ensure that the average total cost (opening plus connection) is small for all the groups.

\begin{defn}\label{def:fair-facility}
Let $X$ be a set of clients composed of groups $X_1, X_2, \dots, X_m$, and let $\calL$ be a set of locations. The goal of {\em fair facility location} is to select a subset $L$ of the locations, so as to minimize
\[ \max_{i} \left\{ \frac{1}{|X_i|} \sum_{u \in X_i} d(u, L) +
\frac{1}{|X|} \sum_{v \in L} f_v \right\}.\]
\end{defn}

We remark that the second term in the objective is independent of the group (as this is the cost paid by every client). 



Our first result is a constant factor approximation algorithm.

\begin{theorem}\label{thm:facility}
There is an efficient (polynomial time) algorithm for fair facility location with an approximation ratio of 4.
\end{theorem}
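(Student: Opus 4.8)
The plan is to set up a linear programming relaxation of fair facility location and round it with the classical filtering/clustering technique (the same one used in the proof of Theorem~\ref{thm:bicriteria}, but now there is no cardinality constraint to worry about), tuning the filtering parameter so that the two sources of rounding loss --- the connection cost and the \emph{shared} opening cost --- are both inflated by exactly $4$.

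First I would write the LP: variables $y_v \ge 0$ for $v \in \calL$ (opening) and $z_{uv} \ge 0$ for $u \in X$, $v \in \calL$ (assignment), with $\sum_v z_{uv} = 1$ and $z_{uv} \le y_v$, together with a variable $\lambda$ constrained by
\[ \tfrac{1}{|X_i|}\textstyle\sum_{u \in X_i}\sum_v d(u,v)\, z_{uv} + \tfrac{1}{|X|}\sum_v f_v y_v \;\le\; \lambda \qquad \text{for every group } i, \]
and objective $\min \lambda$. Any integral fair solution $L$ gives a feasible LP point (set $y_v = 1$ for $v \in L$ and $z_{uv}=1$ for the nearest open facility) with $\lambda$ equal to its cost, so the optimal LP value $\lambda^*$ is a lower bound on the optimum $\opt$ of fair facility location. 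The LP has polynomially many variables and constraints and is therefore solvable in polynomial time.

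Next comes the rounding. Let $(\lambda^*,z^*,y^*)$ be an optimal LP solution, set $R_u = \sum_v d(u,v)\, z^*_{uv}$, fix $\alpha \in (0,1)$, and let $B_u = \{ v \in \calL : d(u,v) \le R_u/(1-\alpha)\}$; by Markov's inequality $\sum_{v\in B_u} z^*_{uv} \ge \alpha$, hence $\sum_{v\in B_u} y^*_v \ge \alpha$. Process the clients in increasing order of $R_u$: repeatedly take the unprocessed client $u$ with smallest $R_u$, add it to a set $S$, open the cheapest facility $v_u \in B_u$, and discard every unprocessed $u'$ with $B_{u'}\cap B_u \ne \emptyset$, assigning such $u'$ to $v_u$; by construction the balls $\{B_u\}_{u\in S}$ are pairwise disjoint. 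Since $f_{v_u} \le f_v$ for $v \in B_u$, we get $\alpha f_{v_u} \le \sum_{v\in B_u} f_v y^*_v$, and summing over the disjoint balls gives $\sum_{v\in L} f_v = \sum_{u \in S} f_{v_u} \le \tfrac{1}{\alpha}\sum_v f_v y^*_v$. For connection cost I would argue \emph{per point} (a faithful bound in the sense of Definition~\ref{def:faithful}): for $u\in S$, $d(u,v_u) \le R_u/(1-\alpha)$; for a discarded $u'$ assigned to $v_u$, picking $w \in B_{u'}\cap B_u$ and using $R_u \le R_{u'}$ gives $d(u',v_u) \le d(u',w)+d(w,u)+d(u,v_u) \le 3R_{u'}/(1-\alpha)$. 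Hence $d(u,L) \le \tfrac{3}{1-\alpha} R_u$ for every client $u$. Plugging both bounds into one group's objective,
\[ \tfrac{1}{|X_i|}\textstyle\sum_{u\in X_i} d(u,L) + \tfrac{1}{|X|}\sum_{v\in L} f_v \;\le\; \tfrac{3}{1-\alpha}\cdot\tfrac{1}{|X_i|}\sum_{u\in X_i} R_u + \tfrac{1}{\alpha}\cdot\tfrac{1}{|X|}\sum_{v} f_v y^*_v, \]
and choosing $\alpha = 1/4$ makes both coefficients equal to $4$, so the right-hand side is at most $4\lambda^* \le 4\,\opt$; taking $\max_i$ finishes the argument.

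The step I expect to be the crux is reconciling the group-wise $\max$ objective with the fact that the opening cost is common to all groups, so we cannot afford to charge it separately per group. The resolution is that the filtering rounding simultaneously delivers a per-point (hence per-group) guarantee on connection cost \emph{and} a single global guarantee on opening cost, and the parameter $\alpha$ can be set so both guarantees carry the same constant $4$ --- which is exactly what lets the $\max_i$ survive rounding with no additional loss. A secondary point to check carefully is the disjointness of $\{B_u\}_{u\in S}$ together with the three-hop triangle-inequality chain, and that $\calL$ may differ from $X$ (the balls and cheapest-facility choices live in $\calL$, while the clustering order and the groups live in $X$); these are routine.
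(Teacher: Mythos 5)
Your proposal is correct and follows essentially the same route as the paper: the paper writes down the same LP (with the shared opening-cost term pulled out of the per-group constraints into the objective, which is equivalent since $\max_i(a_i+b)=(\max_i a_i)+b$) and then invokes the Shmoys--Tardos--Aardal filtering rounding as a black box, observing that it is $4$-faithful per point on connection costs and loses a factor $4$ on opening costs. You have simply unpacked that cited rounding explicitly --- the balls $B_u$, Markov's inequality, the disjointness argument, the three-hop triangle inequality, and the balancing choice $\alpha=1/4$ --- and your identification of per-point faithfulness as the reason the group-wise $\max$ survives rounding is exactly the point the paper makes via Definition~\ref{def:faithful}.
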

\begin{proof}
The proof turns out to follow easily from classic results on facility location. Consider the following linear program:

\begin{align}
 \min  ~\lambda + \frac{1}{|X|} \sum_v y_v \cdot f_v \quad & \text{ subject to} \notag\\  
  \begin{aligned}
\sum_v z_{uv} &= 1 \quad \text{for all $u$}, \notag \\
z_{uv} &\le y_v \quad \text{for all $u,v$}, \notag \\
\end{aligned}
  &\qquad
  \begin{aligned}
   \notag\\
0 \le z_{uv},  ~y_v &\le 1 \quad \text{for all $u, v$}.\notag \\
\end{aligned}\\
  \frac{1}{|X_i|} \cdot \sum_{u \in X_i} \sum_{v \in X} d(u, v) z_{uv} &\le \lambda \quad \text{for all groups $i$},\label{eq:per-group-facility} 
\end{align}

Now we note the rounding algorithm from~\cite{shmoys1997approximation} is a $4$-faithful rounding procedure (as in Definition~\ref{def:faithful}), and also ensures that for the produced integral solution $\ybar$, $\sum_v \ybar_v f_v \le 4 \sum_v y_v f_v$. Using this, the desired approximation factor follows.
\end{proof}

It is an interesting open problem to improve the approximation ratio above. We note that there are many better approximation algorithms for facility location (see, e.g.,~\cite{Li2011A} and references therein). However, many of these algorithm are either faithful only in expectation, or they use primal-dual rounding schemes which do not seem applicable in our context.

\subsection{Ensuring uniform load for facilities}
\label{ssec:load}
Definition~\ref{def:fair-facility} captures the requirement that all the demographic groups have a small cost for accessing the opened facilities. However, this objective does not fully capture real life constraints. Consider the example in Figure~\ref{fig:dense-and-sparse}.
\begin{figure}[!h]
\begin{center}
\includegraphics[scale=0.25]{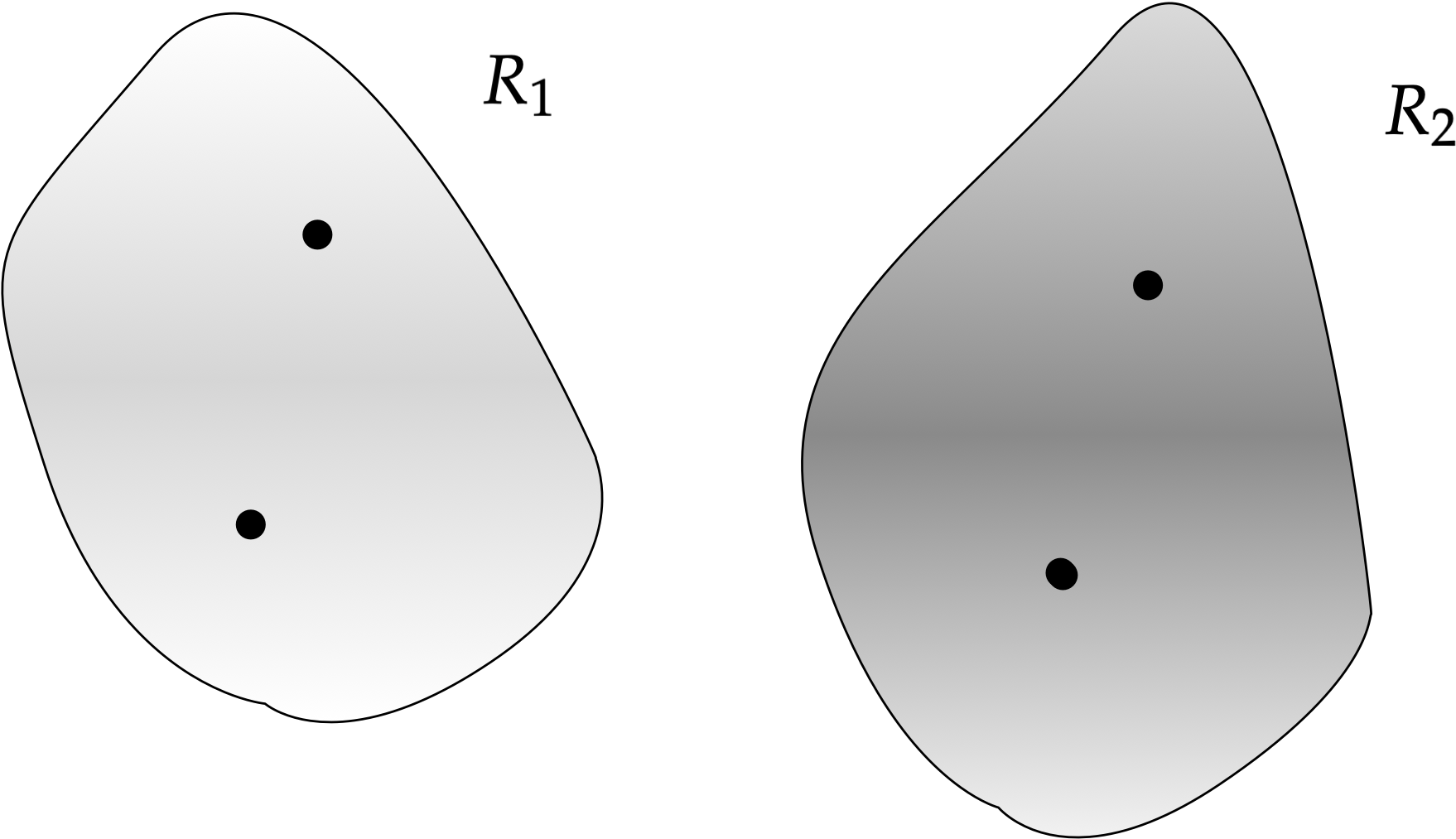}
\end{center}
\caption{Geometrically similar regions $R_1$ and $R_2$, with vastly different population density.}\label{fig:dense-and-sparse}
\end{figure}
Suppose that $R_1$ has a population density of $\delta$ and $R_2$ has a population density of $100 \delta$. Suppose $R_1$ consists all the people of the first group, $X_1$, and $R_2$ has people of $X_2$. Now, opening $2$ facilities in each region is a better solution (in the objective above), compared to opening $3$ in $R_2$ and $1$ in $R_1$, because the connection cost term in the objective is normalized by the size of the group. (This is justified in order to avoid an unfair treatment for a smaller group.)

However, in a time-sensitive application (such as polling), having a lot of clients allocated to a facility can lead to over-crowding, and thus a loss of utility. One way to repair this is by adding a {\em delay} term to the utility. This leads to a quadratic objective function, which appears difficult to optimize. Instead, we propose enforcing load-balance using a {\em capacity} constraint for facilities, the simplest of which is that for some $U$, each facility can serve at most $U$ clients (in total, across all the groups). In practice, these are not hard constraints, and it is reasonable for an algorithm to violate it by a small factor.


Most of the standard clustering formulations (including facility location) have been studied in the presence of capacity constraints. Constant factor approximations are known, both using local search~\cite{arya2004local} and LP relaxations~\cite{an2017facility}. 
In our setting with multiple groups, we show that the LP methods can be adapted to obtain approximation algorithms. 

\paragraph{Notation.}  As before, the set of clients $X$ is composed of $m$ groups $X_1, X_2,  \dots, X_m$ and the set of valid locations for facilities is $\calL$. They lie in a metric space whose distance function is denoted by $d$. $U$ will always denote the capacity bound.
A solution will consist of a subset $S \subseteq \calL$ and an assignment function $g : X \mapsto S$. The quantity $\opt$ will denote
\[ \min_{S,~ g} ~\frac{1}{|X|} \sum_{i \in S} f_i + \max_{t \in [m]} \frac{1}{|X_t|} \sum_{x \in X_t} d(x, g(x)), \]
where the minimization is over $g$ such that $|g^{-1}(s)| \le U$ for all $s \in S$. We prove that it is possible to achieve a constant factor approximation, as long as there is a constant (any fixed constant $>1$) slack in the capacity constraint.

\begin{theorem}\label{thm:facility-capacity}
Let $X, \calL, d, U$ be defined as above, and let $\opt$ denote the optimum objective value. Then for every $\eps > 0$, there is an efficient (polynomial time) algorithm that finds a solution $S, g$ with the following properties:
\begin{enumerate}
    \item the objective value is $\le O_{\epsilon} (1) \cdot \opt$. 
    \item for every $s \in S$, we have $|g^{-1}(s)| \le (1+\eps)~ U$.
\end{enumerate}
\end{theorem}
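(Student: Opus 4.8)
The plan is to reuse the template of Theorems~\ref{thm:bicriteria} and~\ref{thm:facility}: relax to a linear program, observe that its optimum lower-bounds $\opt$, and round with a procedure that is \emph{faithful} (Definition~\ref{def:faithful}) so that the per-group $\max$ term is controlled point-by-point. Introduce variables $y_v \in [0,1]$ for $v \in \calL$, $z_{uv} \in [0,1]$ for $u \in X$, $v \in \calL$, and a scalar $\lambda$, and minimize $\lambda + \frac{1}{|X|}\sum_v f_v y_v$ subject to $\sum_v z_{uv} = 1$ for all $u$, $z_{uv} \le y_v$ for all $u,v$, the capacity constraints $\sum_{u \in X} z_{uv} \le U y_v$ for all $v$, and $\frac{1}{|X_i|}\sum_{u\in X_i}\sum_v d(u,v) z_{uv} \le \lambda$ for every group $i$. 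Any feasible $(S,g)$ for fair capacitated facility location yields a feasible integral LP point (set $\lambda$ to the realized $\max_t$ term) of the same objective value, so the LP optimum is at most $\opt$; solve it to get $(\lambda^*, z^*, y^*)$ with $\lambda^* + \frac{1}{|X|}\sum_v f_v y_v^* \le \opt$. A stronger LP with additional valid inequalities may be needed to control the integrality gap, exactly as in the capacitated facility location literature.

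Next I would reduce the theorem to a rounding statement: it suffices to produce an integral $(S,g)$ with (a) $\sum_{u\in X_i} d(u, g(u)) \le \alpha \sum_{u\in X_i}\sum_v d(u,v) z^*_{uv}$ for every $i$ (i.e.\ $\alpha$-faithfulness summed over each group), (b) $\sum_{v\in S} f_v \le \beta \sum_v f_v y^*_v$, and (c) $|g^{-1}(s)| \le (1+\epsilon) U$ for every $s \in S$. Given such a rounding, (a) gives $\frac{1}{|X_i|}\sum_{u\in X_i} d(u,g(u)) \le \alpha\lambda^*$ for every $i$, so the objective of $(S,g)$ is at most $\alpha\lambda^* + \beta\cdot\frac{1}{|X|}\sum_v f_v y^*_v \le \max(\alpha,\beta)\cdot\opt$, and $O_\epsilon(1) = \max(\alpha,\beta)$ with $\alpha = \alpha(\epsilon)$ and $\beta = O(1)$. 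It is essential that (a) hold \emph{deterministically}: a guarantee only in expectation (as for several of the roundings mentioned after Theorem~\ref{thm:facility}) does not survive the $\max_t$.

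For the rounding itself I would adapt the filtering argument of Theorem~\ref{thm:bicriteria}. Set $R_u = \sum_v d(u,v) z^*_{uv}$ and greedily pick a set $S_0 \subseteq X$ of cluster representatives so that the balls $B(u, R_u/\epsilon)$, $u \in S_0$, are pairwise disjoint and every $u \in X$ has some $\sigma(u) \in S_0$ with $d(u,\sigma(u)) \le 2R_u/\epsilon$ and $R_{\sigma(u)} \le R_u$; as in Theorem~\ref{thm:bicriteria} each such ball carries $y^*$-mass at least $1-\epsilon$. Within each cluster, consolidate the fractional opening mass onto the cheapest location(s) in the ball and round up, opening enough integral capacity to serve the clients routed to the cluster; the $(1+\epsilon)$ slack in the capacity is meant to absorb both the rounding of fractional $y$-mass to integers and the at-most-$\epsilon$ fractional demand that escapes each ball. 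Finally, assign clients to opened facilities by a minimum-cost transportation computation, which is integral because capacities and demands are integers; faithfulness then follows because every client sits within $2R_u/\epsilon$ of its cluster, and the opening cost stays bounded because consolidation only moves $y^*$-mass to locations at least as cheap (the $y^*$-mass $\ge 1-\epsilon$ in each ball bounds the blow-up). Alternatively, one can invoke an off-the-shelf LP-rounding for capacitated facility location with $(1+\epsilon)$ capacity violation descending from~\cite{an2017facility} as a black box and merely verify that it is faithful and opening-cost bounded.

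The main obstacle is precisely this last step: exhibiting a rounding that is \emph{simultaneously} deterministically faithful in connection cost, bounded in opening cost, and only $(1+\epsilon)$-violating in capacity. Each property is available in isolation, but standard capacitated-facility-location roundings trade them off --- several are faithful only in expectation, and the primal--dual schemes achieving the best ratios do not give per-point distance bounds, as already noted --- so the real work is engineering the clustering-style rounding so that the ball-packing inequality yields both the per-point distance guarantee and enough local opening mass to meet the capacities within a $(1+\epsilon)$ factor, while carefully reconciling the clients that fractionally route into a ball with those assigned there after rounding. The LP, the lower bound, and the arithmetic combining the connection and opening terms are all routine.
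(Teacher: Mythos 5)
Your proposal is correct and follows essentially the same route as the paper: the capacitated LP with the per-group $\lambda$ constraint, the observation that it lower-bounds $\opt$, and a Shmoys--Tardos--Aardal-style filtering/ball-packing rounding that consolidates fractional opening mass onto the cheapest facilities in each disjoint ball, rounds up, and finishes with an integral transportation assignment, accepting a $(1+\eps)$ capacity slack. The ``main obstacle'' you flag is exactly what the paper's FFL-Rounding procedure engineers, using two parameters $\theta,\delta$ (filtering radius and the residual-demand threshold defining $C_\delta$, processed in order of increasing $R_u$) to trade off the connection-cost blow-up $\tfrac{3}{\theta(1-\delta)}$, the opening-cost blow-up $\tfrac{2}{\delta(1-\theta)}$, and the capacity violation $\tfrac{1}{(1-\theta)(1-\delta)}$, with $\theta=\delta=\eps/3$.
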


\paragraph{Remark.} It is an interesting open problem to study the case of \emph{hard} capacity constraints. Likewise, we are assuming that the capacity bound $U$ is independent of the facility. Modifying the rounding algorithm to allow different capacities at different locations is also an interesting direction. We note that both can be achieved using LP methods for standard facility location (e.g.,~\cite{an2017facility}), but known rounding algorithms are not faithful to the best of our knowledge, thereby making it tricky to apply to the setting with multiple groups. Meanwhile, we note that in our motivating applications, both uniform capacities and {\em soft} constraints are reasonable assumptions. 

\begin{proof}
The proof follows along the lines of~\cite{shmoys1997approximation}, but we slightly adapt it to get a stronger bound on $|g^{-1}(s)|$. We start by solving a slight variant of the LP~\eqref{eq:per-group-facility}, where we add the capacity constraint:
\begin{equation}
\sum_{u \in X} z_{uv} \le U \cdot y_v \qquad \text{for all $v \in \calL$}.
\label{eq:facility-capacity}
\end{equation}
In an integer solution, the RHS is zero if $y_v=0$ ($v$ is not opened) and $U$ if $y_v=1$ ($v$ is opened), as intended. Starting with the optimal fractional solution for this LP, the rounding algorithm is then as follows:
\begin{algorithm}
\caption{FFL-Rounding ($z, y$), parameters $\theta, \delta \in (0,1)$}
\label{alg:rounding-facility}
\begin{algorithmic}[1]
\State Perform Filtering with parameter $\theta$ on $(z, y)$ to obtain $(z', y')$ \State Define $F = \{v \in \calL : y_v' \ge 1/2\}$, update $y'_v = 1$ for all $v \in F$
\State\label{step:c-delta} Define $C_\delta = \{u \in X : \sum_{v \in F} z'_{uv} < (1-\delta) \}$
\While {$C_\delta \neq \emptyset$}
\State Let $u$ be the client in $C_\delta$ with smallest $R_u$
\State Let $S$ be the set of facilities in $B(u, \frac{R_u}{\theta})$ with $y'_v \in (0,1)$
\State Let $r = \lceil \sum_{v \in S} y'_v \rceil$
\State Open the $r$ cheapest facilities in $S$ (call this set $O$)
\State Update the set $F$, setting $F \leftarrow F \cup O$
\State Update $y_v$ values, setting $y_v = 1$ for $v \in O$ and $y_v =0$ for $v \in S \setminus O$
\State \label{step:move-assign} Move all fractional assignment $z_{uv}'$ from $S \setminus O$ to $O$
\State Update the set $C_\delta$ using the definition in step~\ref{step:c-delta}
\EndWhile
\State Close all remaining fractionally open facilities
\State Let $\beta_u = \sum_{v \in F} z'_{uv}$. If $\beta_u < 1$, re-scale all $z'_{uv}$ by $1/\beta_u$
\State Use bipartite matching to round fractional assignment to an integral one, denoted $g$
\State return $F, g$
\end{algorithmic}
\end{algorithm}

The first step is filtering~\cite{shmoys1997approximation}, where we convert a feasible fractional solution $(z, y)$ to the LP to another fractional solution $(z', y')$, with the additional constraint that if $z'_{uv} > 0$ (i.e., client $u$ has a non-zero fractional assignment to facility $v$), then $v \in B(u, \frac{R_u}{\theta})$. This fractional solution satisfies all the constraints of the LP except the capacity constraint~\eqref{eq:facility-capacity}, and also satisfies~\eqref{eq:facility-capacity} up to a slack factor of $\frac{1}{(1-\theta)}$ on the RHS. The parameters $\theta, \delta > 0$ used here will be chosen later. In the process, the opening cost term in the objective increases by a factor at most $1/(1-\theta)$.

The next step is to consider all the facilities $v \in \calL$ with $y_v' \ge 1/2$, and set $y_v' = 1$, and add all such facilities to the {\em opened} set $F$. Again, this step only increases the opening cost term in the objective by a factor of 2.

At every point of time, the algorithm maintains a set $C_\delta$, which is the set of clients $u$ such that
\[ \sum_{v \in F} z'_{uv} < (1-\delta).\]
In other words, $u$ has a significant ($\ge \delta$) fractional assignment to unopened facilities. As long as $C_\delta$ is non-empty, the algorithm chooses $u \in C_\delta$ with the smallest value of $R_u$. It then opens facilities in the vicinity of $u$. Define $B_u = B(u, \frac{R_u}{\theta})$, for convenience. Suppose $S$ is the set of fractionally open facilities in $B_u$. Because $u \in C_\delta$, we have that $\sum_{v \in S} z_{uv} > \delta$, which in turn implies that $\sum_{v \in S} y_v' > \delta$ (because of the LP constraint $y_v' \ge z_{uv}$ for all $u$). 

The algorithm opens $r = \lceil \sum_{v \in S} y_v' \rceil$ centers from $S$, of the least cost. We claim that in the process, the sum $\sum_{v \in S} f_v y_v'$ increases by at most $\max\{2, 1/\delta\}$. This is easy to see by considering two cases: if $\sum_{v \in S} y_v' \le 1$, then the algorithm opens exactly one center, and the cost increases by at most $1/\delta$. If the sum is $>1$, then since all the $y_v'$ values were $<1/2$, there must have been $> r$ non-zero terms in the summation, and we can argue that the cost increase is at most $2$ (see~\cite{shmoys1997approximation}). As our choice of $\delta$ will be $<1/2$, the $(1/\delta)$ term dominates. We thus have that this step of the algorithm increases only the facility opening cost, and by a factor at most $1/\delta$. 

The next step is the re-allocation of clients from $S \setminus O$ to $O$. Let $u'$ be one such client. If $u' = u$, we simply note that for all $v \in O$, $d(u, v) \le \frac{R_u}{\theta}$, and thus all the (fractional) demand is still routed to a facility at distance at most $\frac{R_u}{\theta}$ away. Likewise, if $u' \ne u$, then because $R_u \le R_{u'}$, demand is still routed to a facility at distance at most $\frac{3R_{u'}}{\theta}$ away.

Following this, we are only left with clients most of whose demand (at least $(1-\delta)$ fraction) has already been routed to facilities in $F$. The scaling by $\beta_u$ (as defined in the algorithm) increases the objective by a factor at most $1/(1-\delta)$.

The steps above together help obtain a solution in which $y'$ is integral, but the $z_{uv}'$ may be fractional. Also, the connection cost term in the objective is scaled by at most
\[ \frac{3}{\theta} \cdot \frac{1}{(1-\delta)}. \]
Moreover, this bound holds for every point (thus the rounding is faithful). 
The facility opening cost is scaled by at most
\[ \frac{1}{1-\theta} \cdot 2 \cdot \frac{1}{\delta}.\]
The capacity constraint is violated by a factor of $\frac{1}{(1-\theta)(1-\delta)}$. 

The final step of the rounding is bipartite matching. Here, since the demands are all unit, we can show that it is possible to convert the fractional assignment into an integral one (by solving an instance of the transportation problem, see~\cite{shmoys1997approximation}).  Setting $\theta = \delta = \epsilon/3$, the result follows.
\end{proof}
\section{Experiments}
\label{sec:experiments}
 In the first two parts of this section we evaluate the proposed fair $k$-median and fair facility location algorithms and provide an empirical assessment for their performance. In the final part, we compare the balance-based approach to fair clustering with respect to our representation-based notions. Throughout the experiments, we consider five datasets:
\begin{itemize}
\item \textbf{Synthetic}. Synthetic dataset with three features. First feature is binary (``majority" or ``minority"), and determines the group example belongs to. Second and third attributes are generated using distribution $\mathcal{N}(0,0.5^2)$ in the majority group, and distribution $\mathcal{N}(3,0.5^2)$ in minority group. Majority and minority groups are of size 250 and 50, respectively.
\item \textbf{Iris}.\footnote{\href{https://archive.ics.uci.edu/ml/datasets/iris}{https://archive.ics.uci.edu/ml/datasets/iris}} Data set consists of 50 samples from each of three species of Iris: Iris setosa, Iris virginica and Iris versicolor. Selected features are length and width of the petals.
\item \textbf{Census}.\footnote{\href{https://archive.ics.uci.edu/ml/datasets/adult}{https://archive.ics.uci.edu/ml/datasets/adult}} Dataset is 1994 US Census and selected attributes are ``age",``fnlwgt", ``education-num", ``capital-gain" and ``hours-per-week". groups of interests are ``female" and ``male".
\item \textbf{Bank}.\footnote{\href{https://archive.ics.uci.edu/ml/datasets/Bank+Marketing}{https://archive.ics.uci.edu/ml/datasets/Bank+Marketing}} The dataset contains records of a marketing campaign based on phone calls, ran by a Portuguese banking institution. Selected attributes are ``age", ``balance", ``duration" and groups of interest are ``married" and ``single".
\item \textbf{North Carolina voters}. \footnote{\href{https://www.ncsbe.gov/results-data/voter-registration-data}{https://www.ncsbe.gov/results-data/voter-registration-data}} In this dataset, we are interested in ``latitude'' and ``longitude'' values of each voter's residence, and use ``race'' attribute to identify different demographic groups.
\end{itemize}

We do not evaluate the capacitated version of fair facility location that we discuss in Section~\ref{ssec:load}. This algorithm is more complex and is beyond the scope of this work. We should also note that throughout this section we compare the results of proposed fair algorithms to standard $k$-median, which is implemented using the standard linear program formulation.

\subsection{Fair $k$-median}
\label{sec:experiment-kmedian}
In this section we employ two algorithms to compute $k$-median clusterings which are group-representative. We call these algorithms LP-Fair $k$-median and LS-Fair $k$-median.

\paragraph{LP-Fair $k$-median}
LP-Fair $k$-median first solves the FairLP linear program presented in Section\ref{sec:lp-ratio}. Since it is not possible to compare the results of bi-criteria algorithm to standard $k$-median algorithms due to the varying number of centers, we chose to obtain integral solutions via the faithful rounding procedure described in Corollary \ref{corr:faithful}. The rounding is based on the matching idea proposed by Charikar et al. \citep{charikar2012dependent}, and is done in four phases:
\begin{description}
\item[Filtering:] Similar to the filtering technique described in section \ref{sec:lp}, we construct a subset $S$ of the points with a small adjustment that after adding a point $u$ to the set $S$, all points $v$ from the original set such that $d(u, v) \le 4R_v$ will not be considered to be added to $S$ anymore.
\item[Bundling:] For each point $u \in S$, we create a bundle $B_u$ which is comprised of the centers that exclusively serve $u$. In the rounding procedure, each bundle $B_u$ is treated as a single entity, where at most one center from it will be opened. The probability of opening a center from a bundle, $B_u$, is the sum of $y_c, c\in B_u$, which we call bundle's volume.
\item[Matching:] The generated bundles have the nice property that their volume lies within $1/2$ and $1$. So given any two bundles, at least one center from them should be opened. Therefore, while there are at least two unmatched points in $S$, we match the corresponding bundles of the two closest unmatched points in $S$.
\item[Sampling:] Given the matching generated in the last phase, we iterate over its members and consider the bundle volumes as probabilities, to open $k$ centers in expectation.
\end{description}
The centers picked in the sampling phase are returned as the final $k$ centers.

\paragraph{LS-Fair $k$-median}
In this section, we propose a heuristic local search algorithm in addition to LP-Fair $k$-median. Arya et al. proposed a local search algorithm to approximately solve the $k$-median problem \citep{arya2004local}. Their algorithm starts with an arbitrary solution, and repeatedly improves it by swapping a subset of the centers in the current solution, with another set of centers not in it. We modify this algorithm to minimize the maximum average cost over all groups. Assuming we're given a cost function and $X_1, X_2, \dots, X_m$ as groups where $X = \cup_i X_i$, LS-Fair $k$-median is presented in Algorithm \ref{alg:lsfair}.

\begin{algorithm}
\caption{LS-Fair $k$-median($k,cost,X,X_1, \dots, X_m$)}
\label{alg:lsfair}
\begin{algorithmic}
\State $S \leftarrow$ an arbitrary set of $k$ centers from $X$
\State $old\_cost \leftarrow inf$
\State $new\_cost \leftarrow max(cost_S(X_1),\dots,cost_S(X_m))$
\While{there is $t' \in X$ and $t \in S$ s.t. $max(cost_{S\char`\\t \cup t'}(X_1),\dots,cost_{S\char`\\t \cup t'}(X_m)) < old\_cost$)}
\State $S \leftarrow S\char`\\t \cup t'$
\State $old\_cost \leftarrow max(cost_{S\char`\\t \cup t'}(X_1),\dots,cost_{S\char`\\t \cup t'}(X_m))$
\EndWhile
\State return $S$
\end{algorithmic}
\end{algorithm}
Unlike the LP-Fair $K$-Medians, we do not provide any theoretical bounds on LS-Fair $K$-Median. In fact, the following example shows that LS-Fair $K$-Medians algorithm with the \error{}-fair objective can have local optima that are arbitrarily worse than the global optimum.

\paragraph{Description of the instance.} Let $A$ and $B$ two sets that are far apart (think of the distance between any pair as $M \rightarrow \infty$). $A = A_1 \cup A_2$, where $|A_1| = 1$ and $|A_2| = t$, for some integer parameter $t$. Likewise, suppose that $B = B_1 \cup B_2$, of sizes $1, t$ respectively. Suppose that all the elements of $A_2$ (so also $B_2$) are at distance $\eps$ away from one another. Suppose the distance between $A_1$ and $A_2$ (so also $B_1$ and $B_2$) is $d$. 

Now, suppose the two groups are $X_1 = A_1 \cup B_2$ and $X_2 = B_1 \cup A_2$.  Let $k=2$. The optimal solution is to choose one point in $A_2$ and another in $B_2$.  This results in an objective value of
\[ \max \{ t \eps + d, t\eps + d \} = t\eps +d. \]

Consider the solution $\{a_1, b_1\}$ that chooses the unique points from $A_1$ and $B_1$. The $k$-median objective for both the groups is $td$, and thus the \error{}-fair objective is $td$.  Now, consider swapping $a_1$ with some point $x \in A_2$. This changes the $k$-median objective for group 1 from $td$ to $td+\eps$, and so even though the swap significantly decreases the objective for the second group, the local search algorithm will not perform the swap. The same argument holds for swapping $b_1$ with a point $y \in B_2$. It is thus easy to see that $\{a_1, b_1\}$ is a locally optimum solution.

However, the ratio between the \error{}-fair objectives of this solution and the optimum is $\frac{td}{t \eps + d} \approx t$ for $\eps \rightarrow 0$. Thus the gap can be as bad as the number of points.

\paragraph{Results.} In this experiment, in order to save space, we focus on the Census and Bank datasets. However, we consider two subsamples of each dataset: 1:1 Census contains 150 female and 150 male examples, 1:5 Census contains 50 female and 250 male examples, 1:1 Bank contains 150 married and 150 single examples, and 1:5 Bank contains 50 married and 250 single examples. The results are summarized in table \ref{ta:heuristic}. Group-optimal presents the optimal average cost for a group, when it is clustered by itself via $k$ centers. $k$-median presents a group's average cost, in a clustering generated by the standard $k$-median algorithm performed on all groups together. The other rows in the table show the percentage increase/decrease in costs, for either of the described fair algorithms, using the two cost functions. In general, the results demonstrate the effectiveness of our algorithms. However, we emphasize on the difference between 1:1 and 5:1 samples. In the 1:1 case, the groups have the same size and standard $k$-median treats them roughly the same. But in the 5:1 case, if the groups have different distributions, standard $k$-median favors majority group over the other, and the effectiveness of our proposed algorithms are more evident. We should note that in all experiments, points were clustered using 3 centers.\footnote{Each dataset was sampled 10 times and we reported the overall average.}

\begin{table*}[htbp]
\caption{Clustering Bank and Census datasets using LS-Fair and LP-Fair algorithms. Standard $k$-median and group optimal
rows present the actual groups’ average costs. \error and \Ratio rows are the percentage increase/decrease in group costs
for proposed algorithms, compared to the corresponding values in standard $k$-median and group optimal rows, respectively.}
\label{ta:heuristic}
\centering
\begin{tabular}{@{}llcccccccc@{}}
\toprule
\multicolumn{2}{l}{\multirow{2}{*}[-0.25em]{Datasets}} & \multicolumn{2}{c}{\textit{1:1 Census}} & \multicolumn{2}{c}{\textit{1:5 Census}} & \multicolumn{2}{c}{\textit{1:1 Bank}} & \multicolumn{2}{c}{1:5 Bank} \\ \cmidrule(lr){3-4} \cmidrule(lr){5-6} \cmidrule(lr){7-8} \cmidrule(lr){9-10} 
\multicolumn{2}{c}{} & female & male & female & male & married & single & married & single \\ \midrule
\multicolumn{2}{l}{Standard $k$-median} & 35264 & 32351 & 40212 & 32689 & 596 & 730 & 948 & 665 \\ \addlinespace[5pt]
\multirow{2}{*}{\error (\%)} & LS-Fair & 98.7 & 102.9 & 94.2 & 110.5 & 105.2 & 98.3 & 79 & 111.2 \\
 & LP-Fair & 98.3 & 105 & 95.5 & 109.1 & 105.7 & 98.2 & 78 & 114.7 \\ \addlinespace[5pt] \midrule
 \multicolumn{2}{l}{Group optimal} & 34499 & 31528 & 35349 & 32619 & 569 & 686 & 659 & 655 \\ \addlinespace[5pt]
\multirow{2}{*}{\Ratio (\%)} & LS-Fair & 102.5 & 102.5 & 107.7 & 106.3 & 107.3 & 105.9 & 113 & 114 \\
 & LP-Fair & 102.6 & 102.5 & 107.6 & 103.8 & 107.7 & 105.2 & 116.3 & 113.4 \\ \bottomrule
\end{tabular}
\end{table*}

\begin{table*}[htbp]
\caption{Effects of enforcing balance on group representations}
\label{tab:balance}
\centering
\begin{tabular}{@{}lcccccccc@{}}
\toprule
\multicolumn{1}{l}{\multirow{2}{*}[-0.25em]{Datasets}} & \multicolumn{2}{c}{Synthetic} & \multicolumn{2}{c}{Iris} & \multicolumn{2}{c}{Census} & \multicolumn{2}{c}{Bank} \\ \cmidrule(lr){2-3} \cmidrule(lr){4-5} \cmidrule(lr){6-7} \cmidrule(lr){8-9}
\multicolumn{1}{c}{}	&	majority	&	minority	&	Setosa    & Versicolor   & female       & male        & married     & single     \\ \midrule
Standard $k$-median	& 0.514		& 0.678		& 0.169     & 0.256        & 34492     & 35083    & 627      & 682     \\
Balanced $k$-median	& 0.430 	& 3.476		& 0.101     & 2.819        & 34019     & 35876    & 622      & 694     \\ \bottomrule
\end{tabular}
\end{table*}

\subsection{Fair facility location}
In this section we empirically evaluate the algorithm presented in section \ref{sec:facloc}. We also use the 4-Faithful rounding procedure proposed in \citep{shmoys1997approximation} to obtain an integral solution. We use the data for voters in the state of North Carolina,  specifically Brunswick county. This dataset contains race and ethnicity of each voter as well as latitude and longitude values of their residence. In this experiment we focus on black and white demographic groups, which roughly constitute 7000 and 55000 voters, respectively.  As for the facilities, we assume each voter's residence could be used as a drop-off location. Therefore, in all experiments we use regular $k$-means clustering to select 100 locations out of the total 62000 data points as the set of facilities. We also assume the setup cost for all facilities are equal. The results of this experiment are presented in Figure \ref{fig:facility_results}, for different values of the facility setup cost. The results show that fair facility location algorithm lowers the average distance to polling locations for the worse off group, namely black voters, in comparison to standard version. Also, as we increase the setup cost for facilities, since fewer number of facilities will be opened, the average distance grows larger for both groups. We should also note that by opening fewer number of facilities, it will be harder for the algorithm to find a fair solution as it is more restricted. This is apparent from the larger discrepancies between the two groups in the fair(er) solution for higher values of setup cost.

\begin{figure}
\centering
\includegraphics[width=\columnwidth]{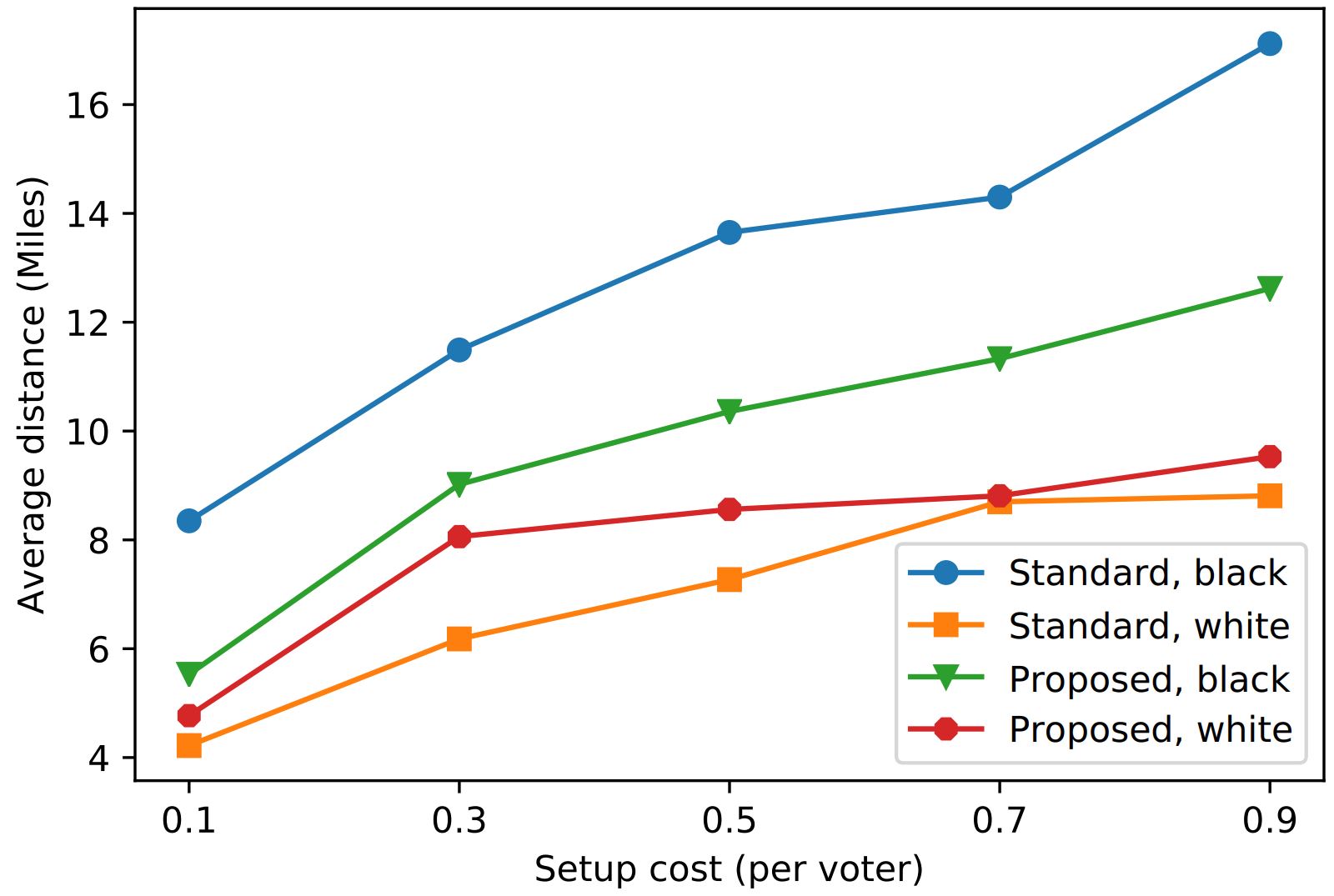}
\caption{Average distance to polling location for black and white voters}    
\label{fig:facility_results}
\end{figure}

\subsection{On balance and representations}
In this section, we empirically study the effects of enforcing balance on group representations. More specifically, we compare each group's average cost for standard $k$-median to the corresponding value under balance constraint. As for the balance-fair $k$-median, we chose to use the algorithm proposed by Backurs et al. \citep{backurs2019scalable}.\footnote{Implementation could be found \href{https://github.com/talwagner/fair_clustering}{here}.} In this experiment, we used the entire Synthetic and Iris datasets, and sampled 300 examples from each of Census (150 male, 150 female) and Bank (150 married, 150 single) datasets. In table~\ref{tab:balance}, we present the average costs for all groups within each dataset, in two clusterings generated by standard $k$-median and balanced $k$-median. In all datasets, we observe enforcing balance amplifies representation disparity across groups and leads to a higher maximum average cost. However, it is especially more noticeable in Synthetic and Iris datasets, where different groups have vastly different distributions.\footnote{The algorithm proposed by Backurs et al. works on only two groups. We chose two groups out of three from Iris. Repeating the experiment with other groups lead to similar results.}


\section{Conclusion}
In this work we presented a novel approach to think of and formulate fairness in
clustering tasks, based on group representativeness. Our main contributions are
introducing a fairness notion which parallels the development of fairness in
classification setting, proposing bicritera approximation algorithms for
$k$-medians under different variations of this notion, as well as approximation algorithms for facility location problem and providing theoretical
bounds for both. Our results suggest that our formulation provides better quality
representations especially when the groups are skewed in size. 


\section{Acknowledgement}
We would like to thank our colleagues Sorelle Friedler and Calvin Barrett from Haverford College, who provided us with voter location data in the state of North Carolina.

\bibliography{refs}
\bibliographystyle{abbrv}

\end{document}